\documentclass[sigconf]{aamas}

\usepackage{balance} 

\doi{UUEW5708}

\makeatletter
\gdef\@copyrightpermission{
  \begin{minipage}{0.2\columnwidth}
   \href{https://creativecommons.org/licenses/by/4.0/}{\includegraphics[width=0.90\textwidth]{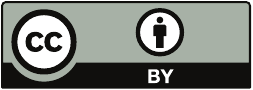}}
  \end{minipage}\hfill
  \begin{minipage}{0.8\columnwidth}
   \href{https://creativecommons.org/licenses/by/4.0/}{This work is licensed under a Creative Commons Attribution International 4.0 License.}
  \end{minipage}
  \vspace{5pt}
}
\makeatother

\setcopyright{ifaamas}
\acmConference[AAMAS '26]{Proc.\@ of the 25th International Conference
on Autonomous Agents and Multiagent Systems (AAMAS 2026)}{May 25 -- 29, 2026}
{Paphos, Cyprus}{C.~Amato, L.~Dennis, V.~Mascardi, J.~Thangarajah (eds.)}
\copyrightyear{2026}
\acmYear{2026}
\acmDOI{}
\acmPrice{}
\acmISBN{}

\acmSubmissionID{222}

\ccsdesc[500]{Theory of computation~Sequential decision making}

\usepackage[utf8]{inputenc} 
\usepackage[T1]{fontenc}    
\usepackage{hyperref}       
\usepackage{url}            
\usepackage{booktabs}       
\usepackage{nicefrac}       
\usepackage{xcolor}         

\usepackage{amsfonts}       
\usepackage{amsmath}
\usepackage{mathtools}
\usepackage{amsthm}
\usepackage{bbm}
\usepackage{algorithm}
\usepackage{algpseudocode}
\usepackage{subcaption}

\usepackage[capitalize,noabbrev]{cleveref}

\theoremstyle{plain}
\newtheorem{theorem}{Theorem}[section]
\newtheorem{proposition}[theorem]{Proposition}

\theoremstyle{definition}
\newtheorem{definition}[theorem]{Definition}

\theoremstyle{remark}

\usepackage{enumitem}
\usepackage{mathrsfs}
\usepackage[on]{editing}
\usepackage{xcolor}
\usepackage{wrapfig}
\usepackage{multirow}
\usepackage{booktabs} 
\usepackage{graphicx} 
\usepackage{adjustbox} 
\usepackage{caption}   
\usepackage{makecell} 

\newcommand{\interv}[1]{\operatorname{do}({#1})}

\Crefname{equation}{Eq.}{Eqs.}
\Crefname{align}{Eq.}{Eqs.}
\Crefname{figure}{Fig.}{Figs.}
\Crefname{tabular}{Tab.}{Tabs.}
\Crefname{theorem}{Thm.}{Thms.}
\Crefname{lemma}{Lem.}{Lems.}
\Crefname{proposition}{Prop.}{Props.}
\Crefname{definition}{Def.}{Defs.}
\Crefname{algorithm}{Algo.}{Algs.}
\Crefname{corollary}{Corol.}{Corol.}
\Crefname{section}{Sec.}{Sec.}
\Crefname{appendix}{App.}{Apps.}
\Crefname{prop}{Prop.}{Props.}
\Crefname{task}{Task.}{Tasks.}
\Crefname{setting}{Setting.}{Settings.}
\Crefname{example}{Example}{Expamples}

\newcommand{\Parens}[1]{\left(#1\right)}

\newcommand{\I}{\boldsymbol{1}}

\newcommand{\doo}{\text{do}}

\def\*#1{\boldsymbol{#1}}
\def\1#1{\mathcal{#1}}
\def\2#1{\mathscr{#1}}
\def\3#1{\mathbb{#1}}

\usepackage{pgfplots}
\DeclareUnicodeCharacter{2212}{−}
\usepgfplotslibrary{groupplots,dateplot}
\usetikzlibrary{patterns,shapes.arrows}
\pgfplotsset{compat=newest}

\definecolor{betterred}{RGB}{228,26,28}
\definecolor{betterblue}{RGB}{55,126,184}
\definecolor{betteryellow}{RGB}{255, 191, 0}
\definecolor{betteryellow2}{RGB}{248, 222, 34}
\definecolor{bettergreen}{RGB}{77,175,74}
\definecolor{betterpurple}{RGB}{152,78,163}
\definecolor{LightCyan}{rgb}{0.88,1,1}

\usetikzlibrary{positioning,arrows,shapes,shapes.arrows,arrows.meta,trees,shapes.misc,shapes.geometric,decorations.pathreplacing,automata}
\tikzset{%
  >={Latex[width=1.5mm,length=2mm]},
  vertex/.style={draw,circle,inner sep=0mm,semithick,minimum width=4mm},
  point/.style = {circle, draw, inner sep=0.04cm,fill,node contents={}},
  uvertex/.style={outer sep=0},
  bidir/.style={<->,dashed,betteryellow,line width=0.45mm},
  dir/.style={->, line width=0.25mm},
  regime/.style={shape=rectangle,fill=black,inner sep=0pt,minimum size=3pt,draw},
  action/.style={shape=circle,fill=black,inner sep=0pt,minimum size=8pt,draw},
  node distance=1cm,
  font=\scriptsize\sffamily
}

\pgfdeclarelayer{back}
\pgfsetlayers{back,main}

\makeatletter
\pgfkeys{%
  /tikz/on layer/.code={
    \def\tikz@path@do@at@end{\endpgfonlayer\endgroup\tikz@path@do@at@end}%
    \pgfonlayer{#1}\begingroup%
  }%
}
\makeatother

\makeatletter
\newcommand{\xdashleftrightarrow}[2][]{\ext@arrow 3359\leftrightarrowfill@@{#1}{#2}}
\def\rightarrowfill@@{\arrowfill@@\relax\relbar\rightarrow}
\def\leftarrowfill@@{\arrowfill@@\leftarrow\relbar\relax}
\def\leftrightarrowfill@@{\arrowfill@@\leftarrow\relbar\rightarrow}
\def\arrowfill@@#1#2#3#4{%
  $\m@th\thickmuskip0mu\medmuskip\thickmuskip\thinmuskip\thickmuskip
   \relax#4#1
   \xleaders\hbox{$#4#2$}\hfill
   #3$%
}
\makeatother

\usepackage{titletoc}
\newcommand\DoToC{%
  \setcounter{tocdepth}{4} 
  \startcontents
  \printcontents{}{1}{\textbf{Contents}\vskip3pt\hrule\vskip5pt}
  \vskip3pt\hrule\vskip5pt
}

\title{Confounding Robust Continuous Control \\ via Automatic Reward Shaping}

\author{Mateo Juliani}
\authornote{Both authors contributed equally to this research.}
\orcid{0009-0003-2502-3815} 
\affiliation{
  \institution{Columbia University}
  \city{New York, NY}
  \country{United States}}
\email{msj2164@columbia.edu}

\author{Mingxuan Li}
\authornotemark[1]
\orcid{0000-0002-2246-1938}
\affiliation{
  \institution{Columbia University}
  \city{New York, NY}
  \country{United States}}
\email{ml@cs.columbia.edu}

\author{Elias Bareinboim}
\orcid{0009-0006-9439-921X}
\affiliation{
  \institution{Columbia University}
  \city{New York, NY}
  \country{United States}}
\email{eb@cs.columbia.edu}

\begin{abstract}
Reward shaping has been applied widely to accelerate Reinforcement Learning (RL) agents' training. However, a principled way of designing effective reward shaping functions, especially for complex continuous control problems, remains largely under-explained. In this work, we propose to automatically learn a reward shaping function for continuous control problems from offline datasets, potentially contaminated by unobserved confounding variables. Specifically, our method builds upon the recently proposed causal Bellman equation to learn a tight upper bound on the optimal state values, which is then used as the potentials in the Potential-Based Reward Shaping (PBRS) framework. 
Our proposed reward shaping algorithm is tested with Soft-Actor-Critic (SAC) on multiple commonly used continuous control benchmarks and exhibits strong performance guarantees under unobserved confounders. More broadly, our work marks a solid first step towards confounding robust continuous control from a causal perspective. Code for training our reward shaping functions can be found at \textcolor{betterblue}{\url{https://github.com/mateojuliani/confounding_robust_cont_control}}.
\end{abstract}

\keywords{Causal Inference; Reinforcement Learning; Unobserved Confounder}

\begin{document}
\pagestyle{fancy}
\fancyhead{}
\maketitle

\section{Introduction}
Reinforcement learning (RL) has demonstrated impressive success in continuous control domains such as robotic manipulation, locomotion, and autonomous systems \citep{schulmanProximalPolicyOptimization2017,schulmanHighDimensionalContinuousControl2016a,mnihHumanlevelControlDeep2015a}. Despite this progress, learning effective policies in high-dimensional, complex environments remains challenging due to sample inefficiency and high sensitivity to reward design. When the original task reward is not efficient to learn from, reward shaping can significantly accelerate learning by injecting informative signals that guide exploration and policy improvement. However, designing effective shaping functions remains a persistent challenge, often requiring substantial domain expertise and manual effort to ensure they are helpful without hurting the performance \citep{DBLP:conf/icml/RandlovA98,pbrs,shapingsurvey2024,DBLP:conf/iclr/PanBS22}.

Potential-Based Reward Shaping (PBRS) \citep{pbrs} offers a principled framework for injecting additional reward signals while preserving the original task’s optimal policy. However, the effectiveness of PBRS critically depends on the quality of the potential function used. Recent work has explored learning state potentials automatically from offline data~\citep{shapingdemonstration,DBLP:conf/corl/MezghaniSBLA22, DBLP:conf/cikm/ZhangQL024}, but such approaches typically assume no unobserved confounders (NUC) \citep{murphy2003optimal,murphy2005generalization}, i.e. access to fully observed (unconfounded) trajectories. Such an assumption can easily break down in many real-world settings. 
Unobserved confounders can arise from human demonstrations, legacy systems, or sensor capability differences in robotic platforms. When the NUC assumption is violated, the effects of candidate policies become generally unidentifiable. That is, the given model assumptions are insufficient to uniquely recover the value function from offline data, regardless of the sample size \citep{pearl2009causality,zhang2019near}. As a result, standard RL methods relying implicitly on NUC can suffer from degraded learning performance in such settings. More recently, \citet{li2025automatic} propose to use partial identification approach to learn confounding robust shaping functions. But their proposed method is limited to discrete and lower dimensional settings. A practical solution for continuous and higher dimensional environments is yet to be discussed. 

In this work, we tackle the problem of automatic reward shaping in continuous control settings where offline data may be confounded. Specifically, we utilize offline trajectories collected from unknown, potentially biased behavioral policies to estimate causal upper bounds on the optimal interventional state values. These bounds are then employed as potential functions within the PBRS framework to construct shaping rewards. By incorporating these shaped rewards, we enable model-free RL agents to perform more informed exploration and policy learning, even in the presence of unobserved confounders.
We empirically evaluate our framework in confounded MuJoCo and Adroit environments with partial observability. Experiments on a suite of confounded continuous environments show that our method consistently outperforms unshaped and causally unaware shaping baselines (CQL-Shaping, \citep{CQL2020Kumar}, T-Rex-Shaping \cite{brown2019extrapolating}). These results highlight the robustness and practical effectiveness of our approach in real-world settings.

Our main contributions are as follows:
\begin{enumerate}[label=\textbullet, leftmargin=20pt, topsep=0pt, parsep=0pt, itemsep=1pt]
    \item We derive a Causal Bellman Equation for the stationary infinite-horizon Confounded Markov Decision Process (CMDP), which converges to a tight upper bound of the optimal state values;
    \item We design a neural learning algorithm that approximates the Causal Bellman Equation 
    in high-dimensional continuous state-action CMDPs;
    \item We use the learned state value bound as the reward shaping function and empirically demonstrate the superior performance boost when applying to Soft-Actor-Critic \citep{haarnoja2018soft} in various challenging confounded continuous control environments. 
\end{enumerate}

\section{Background}
\paragraph{Confounding Robust Decision-making}
We will focus on the sequential decision-making setting in an infinite horizon stationary Markov Decision Process (MDP, \citet{putermanMarkovDecisionProcesses1994book}) where the agent intervenes on a sequence of actions $X_1, \dots$ to optimize the cumulative return over reward signals $Y_1, \dots$ given state observations $S_1, S_2, \dots$ at each corresponding time step. 
Standard MDP formalism focuses on the perspective of the learners who could actively intervene in the environment. Consequently, the data collected from randomized experiments is free from the contamination of unobserved confounding bias and is generally assumed away in the model. However, when considering offline data collected by passive observation \citep{levine2020offline,lupessimism,CQL2020Kumar} where the learner may not necessarily have deliberate control over the behavioral policy generating the data, or when the state attributes are partially observed \citep{DBLP:conf/nips/JaakkolaSJ94,kaelbling1998planning,tennenholtz2020off}, unobserved confounding arises. Consequently, this could lead to biased estimation and safety/alignment issue in various reinforcement learning tasks, including off-policy learning \citep{kallus2018confounding,lupessimism,zhang2024eligibility,li2025automatic,li2025cdqn}, curriculum learning \citep{li2024causally}, and imitation learning \citep{zhang2020causal,sequentialcausal}.

\paragraph{Continuous Control with Deep Reinforcement Learning} In continuous action space, the sample efficiency problem is exacerbated, rendering commonly used on-policy learning solutions unfavorable, such as TRPO \citep{10.5555/3045118.3045319}, PPO \citep{schulmanProximalPolicyOptimization2017} or A3C \citep{DBLP:conf/icml/MnihBMGLHSK16}. At each time step, on-policy algorithms collect new trajectories from the environment only for updating the agents by a single gradient step. As task complexity grows, this procedure becomes increasingly expensive. Off-policy algorithms, on the other hand, reuse past experiences. The direct application of this idea is DQN and its variants \citep{mnihHumanlevelControlDeep2015a}. For continuous policy learning, actor-critic based method is preferred for its stability and easy-to-tune hyper-parameters \citep{schulmanHighDimensionalContinuousControl2016a,haarnoja2018soft}. In this work, we use Soft-Actor-Critic (SAC), a maximum entropy reinforcement learning framework which improves upon the traditional maximum reward framework with substantially better exploration and robustness \citep{DBLP:conf/aaai/ZiebartMBD08,DBLP:conf/icml/HaarnojaTAL17}, as our base learning algorithm.

\paragraph{Potential-based reward shaping (PBRS)}
Reward shaping is a popular line of techniques for incorporating domain knowledge during policy learning. Common approaches such as Potential-Based Reward Shaping (PBRS, \citet{pbrs}) add supplemental signals to the reward function so that it would be easier to learn in future downstream tasks without affecting the optimality of the learned policy. PBRS modifies the reward function in the system by adding the discounted next state potential subtracted by current state potential. This encourages the learning agent to visit states with higher potentials while avoiding visiting states with low potentials. More importantly, optimal policies remain invariant across this shaping process, i.e., every optimal policy learned in the MDP under PBRS is guaranteed to be optimal in the original MDP, and vice versa.

\paragraph{Notations}
We will consistently use capital letters ($V$) to denote random variables, lowercase letters ($v$) for their values, and cursive $\mathcal{V}$ to denote their domains. Fix indices $i, j \in \3N$. We use bold capital letters ($\boldsymbol{V}$) to denote a set of random variables and let $|\boldsymbol{V}|$ denote its cardinality of the set $\boldsymbol{V}$. Finally, $\I_{\*Z = \*z}$ is an indicator function that returns $1$ if event $\*Z = \*z$ holds true; otherwise, it returns $0$. 
\section{The Challenge of Confounded Continuous Control}
In real-world continuous control tasks, agents may have limited sensor information due to cost or environment constraints when deployed in online operation. This leads to poor sample efficiency if the agent is to be tuned fully online. Pre-training with offline data is a common remedy. But people generally expect the behavioral policy generating the offline datasets to have the same sensor capability as the online agent \cite{levine2020offline,calql,CQL2020Kumar}. However, offline data might be collected by agents in a more controlled environment with privileged sensor capability \citep{DBLP:conf/rss/MonaciAS22,hu2024privileged,huang2025pigdreamer}. This mismatch induces unobserved confounding at the decision level: offline decisions reflect privileged information unavailable to the online learning agent. As a result, directly learning from such data without addressing confounding can lead to biased policies and poor online performance. Tackling confounded continuous control is thus critical for reliable and sample-efficient RL under realistic sensory constraints. 

In this paper, we consider an extended family of MDPs that explicitly models the presence of unobserved confounders when generating offline data. 

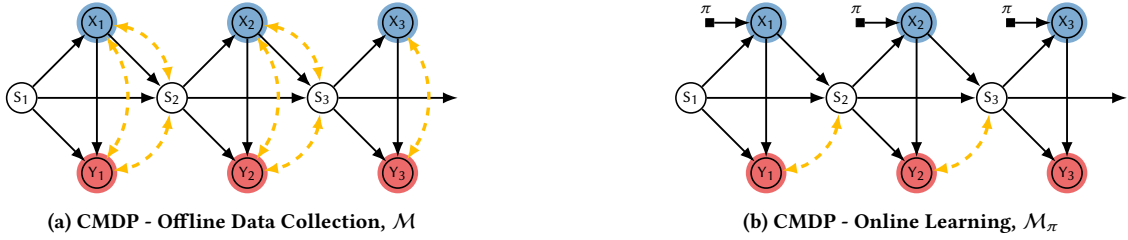
\begin{figure*}[t]
\centering
    \begin{subfigure}[b]{.5\linewidth}\centering
      \begin{tikzpicture}
        \def\outerr{3}
        \def\innerr{2.7}
        \node[vertex] at (0, 0) (S1) {S\textsubscript{1}};
        \node[vertex] at (1, 1) (X1) {X\textsubscript{1}};
        \node[vertex] at (1, -1) (Y1) {Y\textsubscript{1}};
        \node[vertex] at (2, 0) (S2) {S\textsubscript{2}};
        \node[vertex] at (3, 1) (X2) {X\textsubscript{2}};
        \node[vertex] at (3, -1) (Y2) {Y\textsubscript{2}};
        \node[vertex] at (4, 0) (S3) {S\textsubscript{3}};
        \node[vertex] at (5, 1) (X3) {X\textsubscript{3}};
        \node[vertex] at (5, -1) (Y3) {Y\textsubscript{3}};
        \draw[dir] (S1) edge [bend left=0] (Y1);
        \draw[dir] (X1) edge [bend left=0] (Y1);
        \draw[dir] (X2) edge [bend left=0] (Y2);
        \draw[dir] (S1) edge [bend left=0] (S2);
        \draw[dir] (X1) edge [bend left=0] (S2);
        \draw[dir] (S2) edge [bend left=0] (Y2);
        \draw[dir] (S2) edge [bend left=0] (S3);
        \draw[dir] (X2) edge [bend left=0] (S3);
        \draw[dir] (X3) edge [bend left=0] (Y3);
        \draw[dir] (S3) edge [bend left=0] (Y3);
        \draw[dir] (S1) edge [bend left=0] (X1);
        \draw[dir] (S2) edge [bend left=0] (X2);
        \draw[dir] (S3) edge [bend left=0] (X3);
        \draw[dir] (S3) edge [bend left=0] +(1.8,0);
        \draw[bidir] (X1) edge [bend left=36] (S2);
        \draw[bidir] (X2) edge [bend left=36] (S3);
        \draw[bidir] (S2) edge [bend left=36] (Y1);
        \draw[bidir] (S3) edge [bend left=36] (Y2);
        \draw[bidir] (X1) edge [bend left=34] (Y1);
        \draw[bidir] (X2) edge [bend left=34] (Y2);
        \draw[bidir] (X3) edge [bend left=34] (Y3);
        \begin{pgfonlayer}{back}
          \node[circle,fill=betterblue!65,draw=none,minimum size=2*\innerr mm] at (X1) {};
		  \node[circle,fill=betterblue!65,draw=none,minimum size=2*\innerr mm] at (X2) {};
		  \node[circle,fill=betterblue!65,draw=none,minimum size=2*\innerr mm] at (X3) {};
		  \node[circle,fill=betterred!65,draw=none,minimum size=2*\innerr mm] at (Y1) {};
		  \node[circle,fill=betterred!65,draw=none,minimum size=2*\innerr mm] at (Y2) {};
		  \node[circle,fill=betterred!65,draw=none,minimum size=2*\innerr mm] at (Y3) {};
	  \end{pgfonlayer}
      \end{tikzpicture}
      \caption{CMDP - Offline Data Collection, $\mathcal{M}$}
      \label{fig:cmdp offline}
    \end{subfigure}\hfill
    \begin{subfigure}[b]{.5\linewidth}\centering
      \begin{tikzpicture}            
        \def\outerr{3}
        \def\innerr{2.7}
        \node[vertex] at (0, 0) (S1) {S\textsubscript{1}};
        \node[vertex] at (1, 1) (X1) {X\textsubscript{1}};
        \node[vertex] at (1, -1) (Y1) {Y\textsubscript{1}};
        \node[vertex] at (2, 0) (S2) {S\textsubscript{2}};
        \node[vertex] at (3, 1) (X2) {X\textsubscript{2}};
        \node[vertex] at (3, -1) (Y2) {Y\textsubscript{2}};
        \node[vertex] at (4, 0) (S3) {S\textsubscript{3}};
        \node[vertex] at (5, 1) (X3) {X\textsubscript{3}};
        \node[vertex] at (5, -1) (Y3) {Y\textsubscript{3}};
        \node[regime, label={[shift={(-0.05,-0.05)}] \scriptsize $\pi$ }] (p1) at (0.25, 1) {};
        \node[regime, label={[shift={(-0.05,-0.05)}] \scriptsize $\pi$ }] (p2) at (2.25, 1) {};
        \node[regime, label={[shift={(-0.05,-0.05)}] \scriptsize $\pi$ }] (p3) at (4.25, 1) {};
        \draw[dir] (S1) edge [bend left=0] (Y1);
        \draw[dir] (X1) edge [bend left=0] (Y1);
        \draw[dir] (X2) edge [bend left=0] (Y2);
        \draw[dir] (S1) edge [bend left=0] (S2);
        \draw[dir] (X1) edge [bend left=0] (S2);
        \draw[dir] (S2) edge [bend left=0] (Y2);
        \draw[dir] (S2) edge [bend left=0] (S3);
        \draw[dir] (X2) edge [bend left=0] (S3);
        \draw[dir] (X3) edge [bend left=0] (Y3);
        \draw[dir] (S3) edge [bend left=0] (Y3);
        \draw[dir] (S1) edge [bend left=0] (X1);
        \draw[dir] (S2) edge [bend left=0] (X2);
        \draw[dir] (S3) edge [bend left=0] (X3);
        \draw[dir] (S3) edge [bend left=0] +(1.8,0);
        \draw[bidir] (S2) edge [bend left=36] (Y1);
        \draw[bidir] (S3) edge [bend left=36] (Y2);
        \draw[dir] (p1) to (X1);
	    \draw[dir] (p2) to (X2);
	    \draw[dir] (p3) to (X3);
        \begin{pgfonlayer}{back}
          \node[circle,fill=betterblue!65,draw=none,minimum size=2*\innerr mm] at (X1) {};
		  \node[circle,fill=betterblue!65,draw=none,minimum size=2*\innerr mm] at (X2) {};
		  \node[circle,fill=betterblue!65,draw=none,minimum size=2*\innerr mm] at (X3) {};
		  \node[circle,fill=betterred!65,draw=none,minimum size=2*\innerr mm] at (Y1) {};
		  \node[circle,fill=betterred!65,draw=none,minimum size=2*\innerr mm] at (Y2) {};
		  \node[circle,fill=betterred!65,draw=none,minimum size=2*\innerr mm] at (Y3) {};
	  \end{pgfonlayer}
      \end{tikzpicture}
      \caption{CMDP - Online Learning, $\mathcal{M}_\pi$}
      \label{fig:cmdp online}
    \end{subfigure}\hfill
\label{fig:cmdp}
\vspace{-.1in}
\caption{(a) CMDP causal diagram of the offline data generating process; (b) CMDP causal diagram under policy $\interv{\pi}$ during the online learning process. Compared with the standard MDP, the highlighted bi-directed dashed arrows represent confounders affecting both behavioral policy, state transitions and rewards while being unobservable to the online agents.}
\Description{Causal diagrams showing the CMDP for offline data and online learning.}
\end{figure*}

\begin{definition}
\label{def:cmdp}
    A Confounded Markov Decision Process (CMDP) $\mathcal{M}$ is a tuple of $\langle \mathcal{S}, \mathcal{X}, \mathcal{Y}, \mathcal{U}, \3F, \3P \rangle$ where,
    \begin{itemize}[label=\textbullet, leftmargin=20pt, topsep=0pt, parsep=0pt, itemsep=1pt]
        \item $\mathcal{S}, \mathcal{X}, \mathcal{Y}$ are, respectively, the space of observed states, actions, and rewards;
        \item $\mathcal{U}$ is the space of unobserved exogenous noise;
        \item $\3F$ is a set consisting of the transition function $\tau: \1S \times \1X \times \1U \mapsto \1S$, behavioral policy $\beta: \1S \times \1U \mapsto \1X$, and reward function $r: \1S \times \1X \times \1U \mapsto \1Y$;
        \item $\3P$ is a set of distributions $P$ over the unobserved domain $\1U$.
    \end{itemize}
\end{definition}

To model general continuous control problems, we assume the space of states, $\mathcal{S}$, actions, $\mathcal{X}$, and unobserved exogenous noise, $\1 U$, to be multi-dimensional and continuous throughout the paper. 
Consider a demonstrator agent interacting with a CMDP. For every time step $h = 1, \dots$, the nature draws an exogenous noise $U_h$ from the distribution $P(\1U)$; the demonstrator performs an action $X_h \gets f_X(S_h, U_h)$, receives a subsequent reward $Y_h \gets f_Y(S_h, X_h, U_h)$, and moves to the next state $S_{h + 1} \gets f_S(S_h, X_h, U_h)$. The observed trajectories of the demonstrator (from the learner's perspective) are thus summarized as the observational distribution $P(\bar{\*X}, \bar{\*S}, \bar{\*Y})$.
\footnote{We will consistently use $\bar{\*X}, \bar{\*S}, \bar{\*Y}$ to represent trajectory sequences. 
} 
In the data-generating process described above, for every time step $h$, the exogenous noise $U_h$ becomes an unobserved confounder affecting the action $X_h$, reward $Y_h$, and next state $S_{h+1}$ simultaneously. Therefore, CMDP is also referred to MDP with Unobserved Confounders (MDPUC, \citep{DBLP:conf/clear2/ZhangB22}) and is a subclass of Confounded Partially Observed MDP~\citep{DBLP:conf/icml/ShiUHJ22,DBLP:conf/nips/MiaoQZ22,DBLP:journals/ior/BennettK24} where Markov property holds.

The observed distribution of such an offline collected dataset with finite trajectories up to $H$ steps can be written as,
\begin{align}
    \begin{split}
        P(\bar{\*X}_{1:H}, \bar{\*S}_{1:H}, \bar{\*Y}_{1:H}) = P(s_1)\prod^H_{h=1}\bigg(\int_{\1{U}} \I_{s_{h+1} = f_S(s_h, x_h, u_h)} \\
    \I_{x_{h} = f_X(s_h, u_h)} \I_{y_{h} = f_Y(s_h, x_h, u_h)} P(u_h) du_h\bigg)
    \end{split}
\end{align}
The causal diagram in \cref{fig:cmdp offline} showcases how the confounders are affecting state transitions, reward, and the behavioral agent's policy while the online learning agent doesn't have such information when acting in the environment~\cref{fig:cmdp online}. By convention \citep{pearl2009causality}, we use bi-directed arrows (e.g., $X_h \leftrightarrow S_h$) to indicate the presence of unobserved confounders, $U_h$, affecting actions, states and rewards. 

During the online learning phase, as shown in the causal diagram in \cref{fig:cmdp online}, the agent intervenes on the action variable following a policy $\pi(x_h|s_h)$ that maps from state to a distribution over the action domain $\1 X$. This is denoted as policy intervention $\doo(\pi)$ replacing the behavioral policy $f_X$ during the offline data collection phase. The online trajectory distribution in CMDP under $\pi$, $\1M_\pi$ is,
\begin{align}
    \begin{split}
        &P_\pi(\bar{\*X}_{1:H}, \bar{\*S}_{1:H}, \bar{\*Y}_{1:H}) = \\
        &\phantom{xxxxxx}P(s_1)\prod^H_{h=1}\bigg(\pi(x_h|s_h)\1T(s_h, x_h, s_{h+1}) \1 R(s_h, x_h, y_{h})\bigg)
    \end{split}
\end{align}
where the transition distribution $\1 T$ and the reward distribution $\1 R$ are given by, 
\begin{align}
        \1 T(s_h, x_h, s_{h+1}) = \int_{\1{U}} \I_{s_{h+1} = f_S(s_h, x_h, u_h)} P(u_h) du_h\\
        \1 R(s_h, x_h, y_{h}) = \int_{\1{U}} \I_{y_{h} = f_Y(s_h, x_h, u_h)} P(u_h) du_h
\end{align}
In CMDP, the learning goal is still to find the optimal policy $\pi^*$ that maximizes the cumulative return (often discounted by $\gamma \in (0,1)$). That is, $\pi^* = \arg\max_\pi \mathbb{E}[\sum_{h=1}^\infty \gamma^{h-1} y_h]$. This objective function can be solved iteratively using the Bellman Optimality Equation \citep{putermanMarkovDecisionProcesses1994book}, 
\begin{align}
    \begin{split}
        Q^*(s,x) = \mathbb{E}[Y_h + \gamma\max_{x'} Q^*(S_{h+1}, x') |S_h = s, X_h = x]
    \end{split}
\end{align}
where $Q^*(s,x) = \mathbb{E}[\sum_{t=1}^\infty \gamma^{t-1} y_{h+t}|S_h = s, X_h = x]$ by definition is the optimal state action value function denoting the best return after taking action $x$ in state $s$. 

In the offline to online learning setting, one would envision that learning from offline datasets generated by a competitive policy with a good action space coverage should yield near-optimal online policies \citep{CQL2020Kumar,calql}.
This is indeed true under the MDP definition where there are no unobserved confounders. The state transitions and reward functions can be easily identified by the observation distribution of the offline dataset,
\begin{align}
        \1 T(s_h, x_h, s_{h+1}) &= P(s_{h+1}|s_h, x_h)\\
        \1 R(s_h, x_h, y_{h}) &= P(y_h|s_h, x_h).
\end{align}
When the above identification formula hold, several off-policy algorithms have been proposed to estimate the effect of candidate policies from finite observations \citep{watkins1989learning,watkins1992q,swaminathan2015counterfactual,jiang2015doubly,precup2000eligibility,munos2016safe,CQL2020Kumar,calql}. Together with deep learning, these methods could be further extended to complex domains \citep{mnihHumanlevelControlDeep2015a,schulmanHighDimensionalContinuousControl2016a,schulmanProximalPolicyOptimization2017,akkaya2019solvingcube,robocook2023Shi}. However, NUC could be fragile in practice and does not necessarily hold due to violations like sensory mismatch in data generation process. In these situations, applying standard off-policy methods may fail to converge to optimal, despite using powerful deep learning models. \footnote{NUC is orthogonal to the issue of action space coverage discussed by the line of research on conservative Q-learning (CQL) \citep{CQL2020Kumar}. As shown in \Cref{fig:hopper_2_example}, CQL learned state values cannot handle the unobserved confounding issue in offline datasets.}

To witness the challenge of confounded continuous control, we instantiate a confounded variant of Hopper~\citep{towers2024gymnasium} where the offline agent has access to the full state observation, a total of 11-dimension vectors while the online agent can observe all of the dimension except the second dimension, the angle of its thigh joint (comparable to the knee in humans). Not observing the position of its thigh joint presents a genuine challenge to the agent given it would be unable to know to how move its thigh joint to generate the power to hop. As shown in \Cref{fig:hopper_2_example} the vanilla SAC~\citep{haarnoja2018soft} agent trained under partial observation cannot recover $1/2$ of the performance of the agent trained with complete state observations. 

These results expose a critical barrier to reliable learning under restricted sensory capabilities: without access to key hidden factors, even state-of-the-art off-policy RL methods will suffer. And adding good quality offline datasets which supposedly contain valuable information on the optimal state values does not help. As demonstrated in \Cref{fig:hopper_2_example}, commonly used algorithms also fall short under the confounded setting due to their fundamental limitation of assuming NUC implicitly followed from the (PO)MDP definition. Then, our central research question is: Can we extract and transfer such contaminated offline knowledge to guide online learning? The answer is yes. In the next section, we propose the causal bellman equation from which automatic reward shaping with confounded offline data is made possible to facilitate online learning. 

\begin{figure}[H]
    \centering
    \includegraphics[width=\columnwidth]{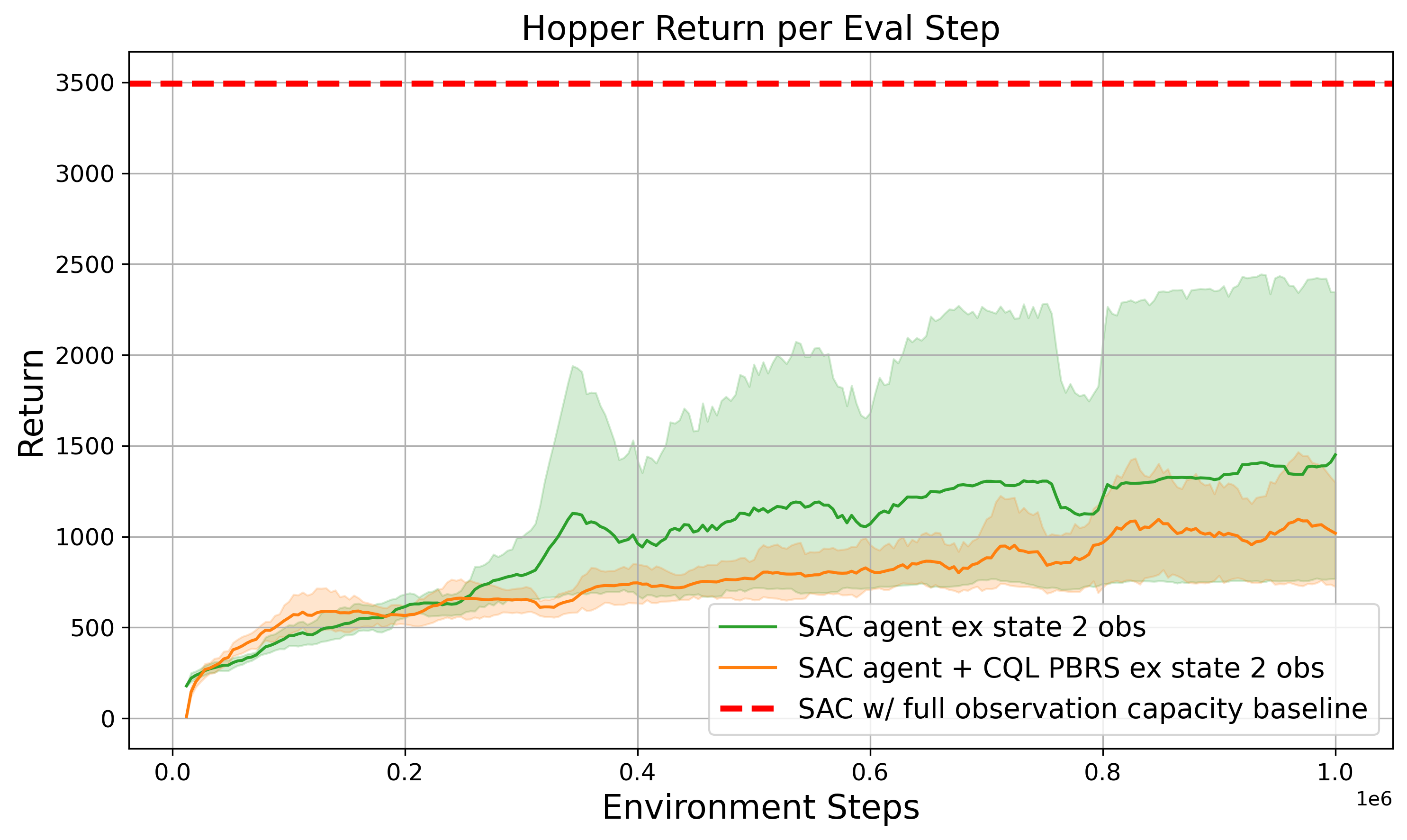} 
    \vspace{-0.1in}
    \caption{Performance of Hopper SAC Agent with full capacity and SAC agent unable to observe state 2.}
    \Description{Performance of Hopper SAC Agent with full capacity and SAC agent unable to observe state 2.}
    \vspace{-0.1in}
    \label{fig:hopper_2_example}
\end{figure}

\section{Confounded Continuous Control with Shaped Rewards}
In this section, we will introduce how we learn an optimistic state potential from confounded continuous offline data, which is then used for online fine-tuning. See \Cref{sec:proof details} in the appendix for proof details of theorems discussed in this section.

\subsection{Learn Optimistic State Potentials via Confounding Robust Offline Pretraining}
It is well acknowledged that a good state potential function is the optimal state value \citep{pbrs}. But without training the agent, one cannot have easy access to the optimal state values. The premise of using state values estimated from offline dataset as state potentials is that such values are close to optimal state values. However, without deliberate control on the quality of the behavioral policy nor the NUC condition, such offline learned values could be heavily biased and cannot be used for reward shaping as we have seen in \Cref{fig:hopper_2_example}. 
Recently \citet{li2025automatic} proposes to use partial identification methods to upper bound the optimal state values for finite horizon non-stationary CMDPs from such confounded offline datasets robustly. 
Here we extend the results to stationary infinite-horizon CMDPs. 
\begin{theorem}[Causal Bellman Optimal Equation for Stationary Infinite-Horizon CMDPs]
\label{thm:causal bellman eq}
    For a CMDP environment $\mathcal{M}$ with reward $Y_h \leq b, b\in 
    \mathbb{R}$, the optimal value of interventional policies, $V^*(\boldsymbol{s}), \forall \boldsymbol{s}\in \1S$, is upper bounded by $V^*(s) \leq \overline{V}(s)$ satisfying the Causal Bellman Optimality Equation,
    \begin{align}
    \begin{split}
        &\overline{V}(s) = \max_x \bigg[P(x|s)  \left( \widetilde{\mathcal{R}}(s,x) +  \gamma\mathbb{E}_{\widetilde{\mathcal{T}}} [\overline{V}(s')] \right) \\
        &\phantom{xxxxxxxxxxxxxx}+ {P(\neg x|s) \Big( b + \gamma\max_{s'}\overline{V}(s') \Big)} \bigg]\label{eq:causal bellman}
    \end{split}
    \end{align}
    where $\widetilde{\mathcal{R}}$ is offline estimated reward distribution and $\widetilde{\mathcal{T}}$   is the estimated transition distribution.
\end{theorem}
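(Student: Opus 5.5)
The plan is to show two things. First, the operator $\overline{\mathcal{B}}$ defined by the right-hand side of \cref{eq:causal bellman} is a $\gamma$-contraction in the sup norm, so $\overline{V}$ exists and is unique. Second, the optimal interventional Bellman operator $\mathcal{B}^*$ is dominated pointwise by $\overline{\mathcal{B}}$, and both are monotone. Iterating from any common starting point then gives $V^* = \lim (\mathcal{B}^*)^k V_0 \le \lim \overline{\mathcal{B}}^k V_0 = \overline{V}$.

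Throughout, I read $\widetilde{\mathcal{R}}(s,x)=\mathbb{E}[Y_h\mid s,x]$ and $\widetilde{\mathcal{T}}(s,x,\cdot)=P(s'\mid s,x)$ as observational quantities. In the continuous case, $P(x\mid s)$ and $P(\neg x\mid s)=1-P(x\mid s)$ are read via densities or events; this is a technicality I would handle by working with a discretized or measurable version.

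\textbf{Step 1 (bounding the interventional quantities).} Fix $(s,x)$. Decompose the interventional reward and transition using the structural equations of \Cref{def:cmdp}, splitting $U_h$ according to whether the behavioral policy would have chosen $x$:
\begin{align*}
\mathcal{T}(s,x,s') &= \int \I_{s'=f_S(s,x,u)}\I_{f_X(s,u)=x}P(u)\,du + \int \I_{s'=f_S(s,x,u)}\I_{f_X(s,u)\neq x}P(u)\,du \\
&= P(x\mid s)\,P(s'\mid s,x) + P(\neg x\mid s)\,Q_{\neg x}(s'),
\end{align*}
where $Q_{\neg x}$ is some unknown distribution. The reward decomposes in the same way, with the unobserved part bounded by $b$. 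Then for any bounded $V$,
\[
\mathbb{E}_{\mathcal{R}}[Y]+\gamma\mathbb{E}_{\mathcal{T}}[V(s')] \le P(x\mid s)\big(\widetilde{\mathcal{R}}(s,x)+\gamma\mathbb{E}_{\widetilde{\mathcal{T}}}[V(s')]\big)+P(\neg x\mid s)\big(b+\gamma\sup_{s'}V(s')\big).
\]
This is the key inequality; it is essentially the Manski-type natural bound used by \citet{li2025automatic} in the finite-horizon case. Taking $\max_x$ gives $\mathcal{B}^*V\le\overline{\mathcal{B}}V$.

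\textbf{Step 2 (monotonicity and contraction).} $\overline{\mathcal{B}}$ is monotone because every occurrence of $V$ enters with a nonnegative coefficient, through an expectation or a sup. For $V,W$, each bracket changes by at most $\gamma\big(P(x\mid s)+P(\neg x\mid s)\big)\|V-W\|_\infty=\gamma\|V-W\|_\infty$. Since a max of $\gamma$-Lipschitz maps is $\gamma$-Lipschitz, $\overline{\mathcal{B}}$ is a contraction, and Banach's theorem gives a unique fixed point $\overline{V}$ with $\|\overline V\|_\infty\le \max(b,\|\widetilde{\mathcal R}\|_\infty)/(1-\gamma)$.

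\textbf{Step 3 (comparison).} Take $V_0$ equal to a constant large enough to upper bound $V^*$; $V_0 = b/(1-\gamma)$ works since $Y_h \le b$. By induction, $(\mathcal{B}^*)^kV_0\le\overline{\mathcal{B}}^kV_0$, using $\mathcal{B}^*U\le\overline{\mathcal{B}}U$ together with monotonicity of $\overline{\mathcal{B}}$. Letting $k\to\infty$ gives the claim, since $V^*$ is the fixed point of the standard $\gamma$-contraction $\mathcal{B}^*$ on the interventional MDP $\mathcal{M}_\pi$.

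The main obstacle I expect is Step 1 in the continuous setting. There, $P(x\mid s)$ is a density rather than a probability, so the split must be made rigorous. I would first try a limiting argument over shrinking neighborhoods of $x$, or else interpret the statement as holding for discretized action events. Tightness, if needed, would follow by constructing an adversarial $Q_{\neg x}$ that puts all mass on the maximizing next state with reward $b$.
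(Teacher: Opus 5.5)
Your proposal is correct and follows the paper's route: bound the interventional backup by $P(x\mid s)\big(\widetilde{\mathcal{R}}+\gamma\mathbb{E}_{\widetilde{\mathcal{T}}}[V]\big)+P(\neg x\mid s)\big(b+\gamma\max_{s'}V(s')\big)$, then use that this operator is a $\gamma$-contraction and iterate to get a fixed point lying above $V^*$; the paper splits this argument between this proof and the proof of \Cref{thm:causal bellman convergence}. Your version is more careful in two places: you derive the bound from the structural equations by splitting $U_h$ on $\{f_X(s,u)=x\}$, where the paper only says ``swap in the causal bounds'', and you make explicit both the monotonicity of $\overline{\mathcal{B}}$ and the comparison $\mathcal{B}^*V\le\overline{\mathcal{B}}V$, which is what the paper's step $V^*\le\lim_k B^kV^*$ actually requires, rather than its overstated claim that $V\le BV$ for every $V$.
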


Compared with the original Bellman Optimal Equation, the Causal Bellman Equation accounts for the uncertainty brought by confounders in the offline dataset via an extra term, 
\begin{align}    
\big(b+\gamma\max_{s'}\overline{V}_{h+1}(s')\big) \label{eq:causal bellman compensation}
\end{align}
which represents the best return that the agent could have achieved from those ``unselected" actions, i.e., $P(\neg x|s)$. With the Causal Bellman Optimal Equation, we can robustly upper bound the optimal state values from a confounded offline dataset generated by CMDP $\mathcal{M}$. Next we show that the extended Causal Bellman Optimal Equation converges to a unique fixed point, which is a valid upper bound on the optimal interventional state values for online agents (\cref{fig:cmdp online}).
\begin{theorem}[Convergence of Causal Bellman Optimal Equation]
\label{thm:causal bellman convergence}
The Causal Bellman Optimality Equation converges to a unique fixed point, which is also an upper bound on the optimal interventional state values under the assumption that $P(s,x)>0, \forall s,x$ in the stationary infinite horizon CMDP $\1M$.
\end{theorem}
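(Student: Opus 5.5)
We view the right-hand side of \cref{eq:causal bellman} as an operator $\mathcal{B}$ acting on bounded functions $V:\1S\to\mathbb{R}$,
\begin{align*}
(\mathcal{B}V)(s) = \max_x \Big[P(x|s)\big(\widetilde{\mathcal{R}}(s,x) + \gamma\mathbb{E}_{\widetilde{\mathcal{T}}}[V(s')]\big) + P(\neg x|s)\big(b + \gamma \sup_{s'} V(s')\big)\Big].
\end{align*}
We then prove the theorem in two parts: first that $\mathcal{B}$ is a $\gamma$-contraction in the sup-norm, which gives existence and uniqueness of a fixed point $\overline{V}$ and convergence of the iterates $V_{k+1}=\mathcal{B}V_k$; and second that this fixed point dominates $V^*$.

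\textbf{Contraction.} Fix bounded $V,W$ and a state $s$. We use the elementary inequality $|\max_x f(x)-\max_x g(x)|\le \sup_x|f(x)-g(x)|$. For each fixed $x$, the reward and bonus terms cancel in the difference of the two bracketed expressions, so it equals
\begin{align*}
\gamma\Big(P(x|s)\,\mathbb{E}_{\widetilde{\mathcal{T}}}[V(s')-W(s')] + P(\neg x|s)\big(\sup V - \sup W\big)\Big).
\end{align*}
Both $|\mathbb{E}[V-W]|$ and $|\sup V-\sup W|$ are at most $\|V-W\|_\infty$, and $P(x|s)+P(\neg x|s)=1$. Hence the difference is bounded by $\gamma\|V-W\|_\infty$, so $\|\mathcal{B}V-\mathcal{B}W\|_\infty\le\gamma\|V-W\|_\infty$. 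Banach's fixed point theorem on the complete space of bounded measurable functions then gives a unique fixed point and geometric convergence of the iterates from any bounded start.

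\textbf{Upper bound.} We argue by monotone induction on the iterates. First, $\mathcal{B}$ is monotone: if $V\le W$ pointwise, then $\mathcal{B}V\le\mathcal{B}W$, since both the expectation and the supremum preserve order and the weights $P(x|s),P(\neg x|s)$ are nonnegative. Let $\mathcal{B}^*$ be the interventional Bellman optimality operator, whose fixed point is $V^*$. The key inequality is $\mathcal{B}^*V\le\mathcal{B}V$ for all bounded $V$. For each $s,x$, we decompose the interventional quantity $\mathbb{E}_{\doo(x)}[Y+\gamma V(S')\mid s]$ by conditioning on whether the behavioral action $X$ equals $x$. This is the bounding step of \cref{thm:causal bellman eq}, and it is exactly the argument of \citet{li2025automatic} applied to a single stationary step. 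On the event $X=x$, the observational quantities $\widetilde{\mathcal{R}},\widetilde{\mathcal{T}}$ coincide with the interventional ones; these conditionals are well defined because we assume $P(s,x)>0$. On the event $X\neq x$, the reward is at most $b$ and the continuation value is at most $\sup V$. Combining the two events gives $\mathcal{B}^*V\le\mathcal{B}V$. With this in hand, we start from $V_0=V^*$ and induct: if $V^*\le V_k$, then
\begin{align*}
V^*=\mathcal{B}^*V^*\le\mathcal{B}V^*\le\mathcal{B}V_k=V_{k+1}.
\end{align*}
Passing to the limit gives $V^*\le\overline{V}$.

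\textbf{Main obstacle.} The delicate step is the per-step inequality $\mathcal{B}^*V\le\mathcal{B}V$ for a general bounded $V$. In the continuous setting we must handle densities rather than probabilities, interpreting $P(x|s)$ as the behavioral probability of the chosen action (or a neighborhood of it) so that the decomposition is well defined. The positivity assumption is what makes the conditionals meaningful. We would also restrict attention to bounded value functions, noting that $V^*\le b/(1-\gamma)$, so that taking $\sup_{s'}$ is finite and the contraction argument applies.
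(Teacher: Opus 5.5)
Your proposal is correct and follows the paper's proof. Both argue a sup-norm $\gamma$-contraction and invoke Banach's theorem for existence, uniqueness and convergence, then iterate the causal operator starting from $V^*$ to conclude $V^*\le\overline{V}$. Your second part is actually cleaner than the paper's: the paper asserts $V\le BV$ for every $V$, which fails in general (e.g.\ for a large constant $V$), whereas what the argument really needs is exactly your two facts, $V^*=\mathcal{B}^*V^*\le\mathcal{B}V^*$ and monotonicity of $\mathcal{B}$.
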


The proposed Causal Bellman Equation does not translate directly into a practical algorithm for high dimensional continuous control problems with function approximators, though. The propensity score of actions $P(x|s), P(\neg x|s)$ is ill-defined as any single point in a continuous distribution has zero probability. Furthermore, naively enumerating all the states to calculate $\max_{s'}\overline{V}(s')$ or enumerating actions to get values from Q-values is intractable in continuous state and action space. Thus, we make several practical approximations when implementing the Causal Bellman Equation.

Firstly, we parametrize the causal upper bound state potential to be $V_{\theta_1}(\cdot)$ and its corresponding target network $V_{\theta'_1}(\cdot)$ for smoother learning updates \citep{mnihHumanlevelControlDeep2015a}. We also replace the outside $\max_x$ in \Cref{eq:causal bellman} with an expectation over the observed action distribution in the offline dataset under the condition that the behavioral policy is competitive. 
Then, we restrict the observed policy distribution to be within a tractable class of Gaussian policies as $P_{\theta_2}$. To train this policy distribution, we maximize the likelihood of the observed actions given states in the offline dataset $\1 D$,
\begin{align}
    J(\theta_2) = \mathbb{E}_{\1 D}\Big [ \log P_{\theta_2}(x|s)\Big] \label{eq:obs action distribution obj}
\end{align}
When training $V_{\theta_1}$, we apply the reparametrization trick to make sampling from $P_{\theta_2}$ differentiable,
\begin{align}
    x = f_{\theta_2}(\epsilon, s) \label{eq:obs action distribution reparam}
\end{align}
where $\epsilon$ is an input noise vector sampled from a fixed standard Gaussian distribution. Lastly, instead of calculating the global best possible next state $\max_{s'}\overline{V}(s')$, we aim at finding the best possible nearby states if an action $x' \neq x$ had been taken. Thus, we choose to model the difference distribution between the current state $s$ and next state $s'$ given state action pair $(s,x)$. Similarly, we restrict this distribution to be Gaussian, $P_{\theta_3}$, and the training objective is to maximize the log likelihood of observed state differences in $\1 D$,
\begin{align}
    J(\theta_3) = \mathbb{E}_{\1 D}\Big [ \log P_{\theta_3}\big(\Delta_s\mid s,x\big)\Big] \label{eq:state diff obj}
\end{align}
where $\Delta_s$ is the state difference between $s$ and $s'$. And we apply the reparametrization trick again for differentiable sampling,
\begin{align}
    \Delta_s = f_{\theta_3}(\epsilon, s, x) \label{eq:state diff reparam}.
\end{align}

Now we can approximate the Causal Bellman Equation backup, $\1B$, with parametrized neural network components as,
\begin{align}
    \begin{split}
        &\hat{\1B}\overline{V}_{\theta_1}(s) = 
        \mathbb{E}_{\1D} \bigg[P_{\theta_2}(x|s)  \left( y +  \gamma \overline{V}_{\theta_1}(s') \right) \\
        &\phantom{xxxxxxxxxxxx}+ P_{\theta_2}(x'|s)\Big( \max_{y \in \1 D} y + \gamma\overline{V}_{\theta_1}(s + \Delta_s) \Big) \bigg]\label{eq:causal bellman parametrized}
    \end{split}
\end{align}
where $x' = \arg\max_{x''\sim P_{\theta_2}(\cdot|s), x'' \neq x} V_{\theta_1}(s + \Delta_s'), \Delta_s'\sim P_{\theta_3}(\cdot|s,x')$ is the action that maximize the return of ``the road not taken", and $\Delta_s \sim P_{\theta_3}(\cdot|s,x')$ is the state difference sampled given $s, x'$.
The extra compensation term in \Cref{eq:causal bellman compensation} is approximated by the maximum reward observed in the offline dataset plus the discounted best next state value as if the agent had taken $x'$ and transited to a state $s'$ that is $\Delta_s$ away from $s$.
In implementation, we also approximate the action propensity score with its corresponding Gaussian density to avoid the zero probability issue.
The state potential function $\overline{V}_{\theta_1}$ is then trained to minimize the squared residual error,
\begin{align}
    &J(\theta_1) = \mathbb{E}_{\1D} \bigg[ \frac{1}{2} \Big(\overline{V}_{\theta_1}(s) - \hat{\1B}\overline{V}_{\theta_1'}(s) \Big)^2 \bigg] \label{eq:causal bellman obj}
\end{align}

See \cref{alg:causal-upper-bound} for the full pseudo-code of learning the state potentials from continuous confounded offline data. In the next section, we will illustrate how to use this learned state potentials as reward shaping functions during online training with SAC.

\begin{algorithm}[t]
\caption{Neural Causal Upper Bound State Potential}
\label{alg:causal-upper-bound}
\begin{algorithmic}[1]
    \State Initialize parameters $\theta_1, \theta_1', \theta_2, \theta_3$
    \While{Not Converged}
        \State Sample an offline batch $\{(s_i, x_i, s'_i, y_i)\}_{i=1}^B$
        \State Update observed policy, $\theta_2 \gets \theta_2 + \lambda_2 \hat\nabla J(\theta_2)$ (\Cref{eq:obs action distribution obj})
        \State Update state difference, $\theta_3 \gets \theta_3 + \lambda_3 \hat\nabla J(\theta_3)$ (\Cref{eq:state diff obj})
    \EndWhile
    \While{Not Converged}
        \State Sample an offline batch $\{(s_i, x_i, s'_i, y_i)\}_{i=1}^B$
        \State Update state potential, $\theta_1 \gets \theta_1 - \lambda_1 \hat\nabla J(\theta_1)$ (\Cref{eq:causal bellman obj})
        \State Update target networks, $\theta_1' \gets \tau \theta_1 + (1-\tau)\theta_1'$
    \EndWhile\\
    \Return $V_{\theta_1}(s)$
\end{algorithmic}
\end{algorithm}

\subsection{Online Fine-tuning with Reward Shaping}
We first re-establish the optimal policy invariance property of PBRS~\citep{pbrs} in infinite-horizon stationary CMDPs.
\begin{proposition}
\label{prop:pbrs cmdp}
    For a CMDP under policy $\pi$, $\1M_{\pi}$, let $\1M_{\pi}'$ be the CMDP obtained from $\1M_{\pi}$ by replacing the reward with the following function, for every time step $h = 1, \dots, H$,
    \begin{align}
        y'_h:=
            y_h + \gamma \phi_h(S_{h+1}) - \phi_h(S_h), \label{eq:shaping}
    \end{align}
    where $y_h$ is the original reward returned by $\1M_\pi$; $\phi_h(\cdot):\mathcal{S}\mapsto \mathbb{R}$ is a real valued potential function. Then every optimal policy in $\mathcal{M}_\pi$ will also be an optimal policy in $\mathcal{M}_\pi'$, and vice versa.
\end{proposition}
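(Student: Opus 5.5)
The plan is to follow the classical telescoping argument of \citet{pbrs}, adapted to the interventional distribution $P_\pi$ of the CMDP under $\interv{\pi}$. Under $\1M_\pi$ the agent's actions no longer depend on $U_h$. The only randomness coupling $U_h$ with observed variables is through $\1T$ and $\1R$, and both act on $(s_h,x_h)$ alone. So $\1M_\pi$ is an ordinary (possibly continuous) MDP with kernel $\1T$ and reward law $\1R$, and I would begin by recording this fact. It lets value functions $V^\pi$ and $Q^\pi$ be defined by conditional expectations under $P_\pi$. Since the statement indexes $\phi_h$ by time but the setting is stationary, I will treat $\phi_h\equiv\phi$. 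If time-varying potentials are desired, the same computation goes through with $\gamma\phi_{h+1}$ in place of $\gamma\phi_h$, and I would note this explicitly. I will also assume $\phi$ is bounded, or at least that $\gamma^{H}\mathbb{E}[\phi(S_{H+1})]\to 0$, so that the series involved converge.

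The key step is the telescoping identity. For any policy $\pi$ and any start state $s$, I would write
\begin{align*}
\sum_{h=1}^{H}\gamma^{h-1}y'_h=\sum_{h=1}^{H}\gamma^{h-1}y_h+\gamma^{H}\phi(S_{H+1})-\phi(S_1).
\end{align*}
Taking expectations under $P_\pi$ conditional on $S_1=s$ and letting $H\to\infty$ then gives $V'^{\pi}(s)=V^{\pi}(s)-\phi(s)$. The same argument conditioned on $(S_1,X_1)=(s,x)$ gives $Q'^{\pi}(s,x)=Q^{\pi}(s,x)-\phi(s)$.

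The conclusion follows directly. The shift $-\phi(s)$ does not depend on the policy, so for every $s$ the ordering of policies by value is the same in $\1M_\pi$ and $\1M'_\pi$. Hence $\pi^*$ maximizes $V^\pi(s)$ for all $s$ if and only if it maximizes $V'^\pi(s)$ for all $s$. Equivalently, $\arg\max_x Q'^*(s,x)=\arg\max_x Q^*(s,x)$, which gives both directions of the claim.

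I expect the main obstacle to be technical rather than conceptual. First, the interchange of limit and expectation must be justified, which should follow from dominated convergence using bounded rewards ($Y_h\le b$ together with integrability) and bounded $\phi$. Second, the continuous-space setting requires the relevant suprema to be measurable, or else the argument should be phrased via policy values rather than pointwise $\arg\max$. I would handle the second point by comparing policy values directly, so that the result does not depend on the existence of greedy selectors.
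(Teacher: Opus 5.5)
Your proof is correct. Its first step is the whole of the paper's proof: the appendix only observes that the CMDP under $\interv{\pi}$ is Markov and then defers to \citet{pbrs}. The difference lies in which classical argument you run afterwards.

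\citet{pbrs} work with the Bellman optimality equation. Subtracting $\phi(s)$ from both sides shows that $Q^*(s,x)-\phi(s)$ solves the optimality equation of the shaped process. Uniqueness of that fixed point then gives $Q'^*=Q^*-\phi$, and hence the same greedy policies. You instead telescope the shaped return pathwise and obtain $V'^{\pi}=V^{\pi}-\phi$ and $Q'^{\pi}=Q^{\pi}-\phi$ for every policy.

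Your route has three advantages:
\begin{itemize}
\item It proves a stronger, policy-wise statement.
\item Because the identity is pathwise, it holds for any policy class, including history-dependent ones.
\item It never needs existence of an optimal fixed point or of measurable greedy selectors. That is the delicate part of transplanting the Bellman argument to continuous $\1S\times\1X$.
\end{itemize}
Its price is the limit interchange and the tail condition $\gamma^{H}\mathbb{E}[\phi(S_{H+1})]\to 0$. The Bellman route is a one-step algebraic check. It too needs bounded $\phi$, so that $Q^*-\phi$ lies in the space where the contraction gives uniqueness. So both proofs rely on a boundedness hypothesis that the paper leaves implicit.

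Your remark on $\phi_h$ is substantive, not cosmetic. With distinct $\phi_h$ in both slots, the telescoped sum leaves residual terms $\gamma^{h-1}\big(\phi_{h-1}(S_h)-\phi_h(S_h)\big)$ for $h\ge 2$. These depend on the policy through the state distribution. So the proposition as written needs either $\phi_h\equiv\phi$ or the index shift to $\gamma\phi_{h+1}(S_{h+1})$.

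One small precision: under $\interv{\pi}$ the noise $U_h$ still couples $Y_h$ and $S_{h+1}$, since the bi-directed $S_{h+1}\leftrightarrow Y_h$ edge survives in the online diagram. So $\1M_\pi$ is an MDP whose reward is correlated with the next state, not one with independent laws $\1R$ and $\1T$. This does not affect your argument, which uses only linearity of expectation and the independence of $U_h$ from the past.
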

This guarantees that the agent could still obtain the same optimal policy as the agent without using reward shaping.

For the learning algorithm, we choose to use a popular value-based maximum entropy learner, SAC \cite{haarnoja2018soft}, for its amenity to PBRS. Because for policy gradient based methods like PPO \cite{schulmanProximalPolicyOptimization2017}, multi-step returns are commonly adopted when calculating advantages \cite{schulmanHighDimensionalContinuousControl2016a} resulting in the intermediate shaping terms being canceled out without affecting learning process. 

The original SAC algorithm learns a soft policy that minimizes the following objective function:
\begin{align}
    J(\phi) = \mathbb{E}_{\1D}\bigg [ {\operatorname{D_{KL}}} \Big(\pi_\phi(\cdot|s) \Big\| \frac{\operatorname{exp}(Q_\theta(s, \cdot))}{Z_\theta(s)}\Big) \bigg]
\end{align}
where the parametrized policy $\pi_\phi$ is updated towards the exponential of the learned Q function $Q_\theta$ and $Z_\theta(s)$ normalizes the distribution.
With the reward shaping function defined in \Cref{eq:shaping}, the Q value function $Q_\theta$ is now trained to track returns with the shaped reward $y'$ instead of $y$ while other parts of SAC stay unchanged.
\section{Experiments}

In this section, we evaluate the efficacy of the state value upper bounds learned in \Cref{alg:causal-upper-bound} by applying them as potentials in Potential-Based Reward Shaping (PBRS, \citep{pbrs}) to augment an SAC agent's training rewards in a suite of confounded, continuous control environments. We compare our results to 1) a baseline online SAC agent that does not use PBRS, 2) CQL PBRS \cite{CQL2020Kumar} that uses offline learned conservative Q-values as shaping potentials in PBRS, 3) an online SAC agent using recurrent networks, and 4) T-REX PBRS \cite{brown2019extrapolating} which uses inverse reinforcement learning to learn a reward function, which we use as the potential. We provide further commentary on which cases the causal reward shaping function can provide the most improvement to online training, and how offline data quality affects performance.

\begin{figure*}[ht]
    \vspace{-0.1in}
    \centering
    \includegraphics[width=.9\textwidth]{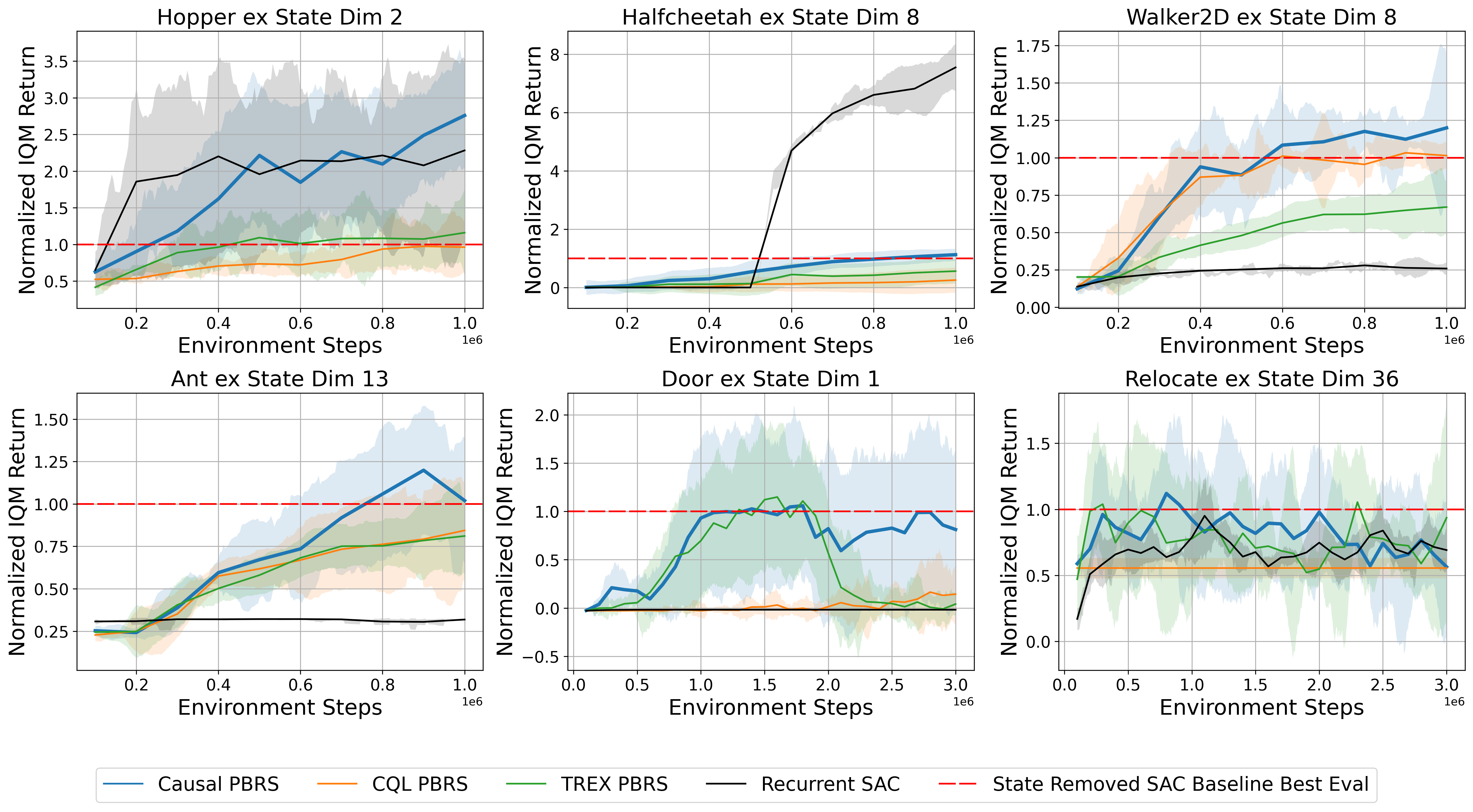} 
    \vspace{-0.1in}
    \caption{Normalized IQM returns w.r.t State Removed Baseline SAC agent in confounded continuous control benchmarks.
    }
    \vspace{-0.1in}
    \Description{Causal PBRS performance by offline data quality.}
    \label{fig:IQM}
\end{figure*}

\subsection{Experiment Design}
\label{experiment_design_section}

\textbf{Environments and Offline Data}: We selected six continuous control tasks from the Gymnasium \cite{towers2024gymnasium} environment to evaluate our causal PBRS method. We selected four MuJoCo \cite{todorov2012mujoco} tasks on robotic locomotion including 1) Hopper - control a single legged agent to hop in 2D space without falling, 2) HalfCheetah - control a two legged, Cheetah-like, agent to run forward in 2D space as fast as possible, 3) Walker2D - control a bipedal agent to walk forward in 2D space without falling over, and 4) Ant - control a quadruped robot to walk forward in 3D space. We also selected two tasks from Adroit \cite{rajeswaran2017learningadroit} on controling a robotic hand to open a door (Door) and to move a ball to the target location (Relocate).  We use the offline datasets from Minari \cite{minari}. For details, see \Cref{sec:env and offline dataset} in the appendix.

\textbf{Confounded Environment Setup}: To simulate unobserved confounders in our experiments, we remove dimensions from the environment's observation space\footnote{See \cref{sec:proving_confounding} for tests showing this method creates confounding bias between the behavior policy and state transition as depicted in \cref{fig:cmdp offline}}. As a result, we simulate trajectories generated by an agent with access to richer sensory information (the removed dimensions), which is not accessible to the online agent. Consequently, typical off-policy learners cannot identify the behavioral policy or the state / q-value function accurately, given the unobserved confounders in the offline datasets \citep{pearl2009causality,zhang2019near}. In practice, removing some of the dimensions will render the task much more challenging than removing others. For example, removing the agent's velocity observation in the MuJoCo environments - which is a key input into the environment's reward function - would make it difficult to derive the useful state value function from the offline data. On the other hand, removing an observation dimension that is independent of the state / Q-value function would not affect standard off-policy learners significantly. Therefore, to understand which dimensions to remove to create meaningful unobserved confounders, we use the Randomized Conditional Independence Test (RCIT, \cite{RCIT}) to measure the independence degree between an observation dimension and the episode's reward-to-go, conditioned on the remaining observations and actions. For detailed analysis on the relations between the removal of an observation dimension and the online agent's performance, see \Cref{ci_v_performance}.

\textbf{Offline and Online Training}: Once we remove selected state dimensions from the offline dataset, we train our state value upper bounds following \Cref{alg:causal-upper-bound}. We train the environment models for 50 epochs, and then train the state value upper bounds for 200 epochs. To prevent over-estimation (in particular from the "road not taken" estimate $\gamma\overline{V}_{\theta_1}(s + \Delta_s)$), we clip state value functions using $\frac{\max_{y \in D} y}{1-\gamma}$, the theoretical max value of the value function given the max observed reward in the offline data. As a comparison to our causal shaping functions, we also estimate Q-value functions using CQL \cite{CQL2020Kumar}, and a reward function using T-REX \cite{brown2019extrapolating}. 

After training the offline state value functions, we test five online learners: 1) a baseline SAC where we remove the selected state dimension from the observation space (SAC Baseline), 2) a PBRS-based approach where we use the learned Q-value function from CQL (CQL PBRS), 3) a PBRS-based approach using T-REX (T-REX PBRS), 4) an SAC agent using recurrent networks (Recurrent SAC) and 5) a PBRS-based approach where the potentials are the learned state value upper bounds (Causal PBRS, ours). For the Causal, T-REX and CQL PBRS method, we apply a scaled version of the shaping rewards detailed in Prop. \ref{prop:pbrs cmdp}. Specifically, we use $y_{env} + \beta(\gamma \phi(s') -  \phi(s))$ as the shaping reward, where $y_{env}$ is the observed environment reward, $\beta$ is a scaling factor, $\phi$ is the potential function, and $s$ is the current state and $s'$ is the next state. Empirically, we find the agent benefits the most from the potentials when $\beta \leq 1$. 

\subsection{Causal Reward Shaping Performance}

\begin{table*}[t]
\centering
\caption{Best evaluation returns of agents ($\pm$ 1 standard deviation) on the 18 confounded environments. 1M steps for MuJoCo, 3M for Adroit).
All results are averaged over 5 seeds (except Recurrent SAC, which is over 3 seeds), and smoothed over 10 evals. Bold Numbers indicate best best-performing methods. Our Causal PBRS method significantly outperforms the baselines.}

\label{tab:state_removal_results}
\adjustbox{max width=\textwidth}{
\begin{tabular}{l|c|c|c|c|c|c|c}
\toprule
Environment & Full State SAC & State Dim Removed & SAC Baseline & CQL & T-REX & Recurrent SAC & Causal PBRS (ours) \\
\midrule

\multirow{8}{*}{Hopper-v5} & \multirow{8}{*}{3500} 
 & 1 & 2073 $\pm$ 1424 & 1778 $\pm$ 1045 & 1276 $\pm$ 835 & 562 $\pm$ 72 & \textbf{2252 $\pm$ 1081} \\
 &  & 2 & 1450 $\pm$ 1123 & 1096 $\pm$ 481 & 1344 $\pm$ 628 & 2432 $\pm$ 1067 & \textbf{2762 $\pm$ 836} \\
 &  & 3 & 3283 $\pm$ 127 & 1478 $\pm$ 444 & 2780 $\pm$ 859 & 1601 $\pm$ 1548 & \textbf{3582 $\pm$ 37} \\
 &  & 5 & \textbf{3321 $\pm$ 136} & 1038 $\pm$ 19 & 2862 $\pm$ 838 & 2878 $\pm$ 573 & 3183 $\pm$ 518 \\
 &  & 7 & 2935 $\pm$ 467 & 1245 $\pm$ 291 & 2550 $\pm$ 976 & 2693 $\pm$ 1147 & \textbf{3011 $\pm$ 332} \\
 &  & 5,6 & 655 $\pm$ 155 & 1143 $\pm$ 728 & 823 $\pm$ 571 & \textbf{1978 $\pm$ 1347} & 1037 $\pm$ 210 \\
 &  & 1,2,3,4 & 304 $\pm$ 30 & \textbf{557 $\pm$ 244} & 310 $\pm$ 67 & 460 $\pm$ 95 & 305 $\pm$ 15 \\
 &  & 7,8,9,10 & 372 $\pm$ 19 & 422 $\pm$ 43 & 367 $\pm$ 62 & \textbf{3468 $\pm$ 124} & 1442 $\pm$ 1449 \\

\midrule
\multirow{3}{*}{Halfcheetah-v5} & \multirow{3}{*}{12400} 
 & 1 & 2013 $\pm$ 32 & 2039 $\pm$ 31 & 2059 $\pm$ 30 & \textbf{7957 $\pm$ 645} & 2055 $\pm$ 24 \\
 &  & 3 & 8931 $\pm$ 1489 & 9748 $\pm$ 177 & 9120 $\pm$ 882 & \textbf{10057 $\pm$ 555} & 8954 $\pm$ 665 \\
 &  & 8 & 896 $\pm$ 494 & 373 $\pm$ 454 & 615 $\pm$ 443 & \textbf{7939 $\pm$ 851} & 1103 $\pm$ 216 \\

\midrule
\multirow{3}{*}{Walker2D-v5} & \multirow{3}{*}{4050} 
 & 6 & 3981 $\pm$ 154 & 3243 $\pm$ 283 & \textbf{4045 $\pm$ 306} & 2210 $\pm$ 1636 & 3899 $\pm$ 587 \\
 &  & 8 & 3640 $\pm$ 208 & 3766 $\pm$ 259 & 2830 $\pm$ 832 & 1105 $\pm$ 51 & \textbf{4295 $\pm$ 373} \\
 &  & 10 & 3925 $\pm$ 137 & 3600 $\pm$ 428 & \textbf{4417 $\pm$ 445} & 1351 $\pm$ 1285 & 3950 $\pm$ 397 \\

\midrule
\multirow{2}{*}{Ant-v5} & \multirow{2}{*}{4000} 
 & 13 & 3084 $\pm$ 607 & 2792 $\pm$ 1013 & 2830 $\pm$ 832 & 994 $\pm$ 4 & \textbf{3389 $\pm$ 1137} \\
 &  & 16 & \textbf{3750 $\pm$ 896} & 3190 $\pm$ 464 & 3662 $\pm$ 1693 & 1225 $\pm$ 448 & 2963 $\pm$ 1369 \\

\midrule
\multirow{1}{*}{AdroitHandDoor-v1} & \multirow{1}{*}{NA} 
 & 1 & 1472 $\pm$ 1385 & 415 $\pm$ 628 & 1659 $\pm$ 1189 & -27 $\pm$ 1 & \textbf{1725 $\pm$ 1396} \\

\midrule
\multirow{1}{*}{AdroitHandRelocate-v1} & \multirow{1}{*}{NA} 
 & 36 & 24 $\pm$ 9 & 13 $\pm$ 2 & 29 $\pm$ 11 & 23 $\pm$ 7 & \textbf{30 $\pm$ 13} \\

\midrule
\multicolumn{3}{l|}{Normalized Mean ($\uparrow$)} & 1.00 & 0.85  & 0.96 & \textbf{1.93} &  1.29 \\
\multicolumn{3}{l|}{Normalized Median ($\uparrow$)} & 1.00 & 0.85  & 0.98 & 0.89 &  \textbf{1.09} \\
\multicolumn{3}{l|}{Normalized IQM ($\uparrow$)} & 1.00 & 0.81  & 0.96 & 0.88 &  \textbf{1.09} \\
\bottomrule
\end{tabular}
}

\end{table*}

\Cref{fig:IQM} shows the interquartile mean (IQM) returns normalized by the state removed SAC baseline's best evaluation score in each envi- ronment. \Cref{tab:state_removal_results} provides the returns (including the best average return reached by the agent and the return at the final evaluation step), normalized mean, median, and inter-quartile mean return relative to the SAC baseline under partial state observation. 

Overall, the Causal PBRS method outperforms the causally unaware reward shaping baselines (CQL, T-REX Shaping) consistently for an average normalized mean score of 1.29 and normalized IQM of 1.09. In particular, the Causal PBRS method performs the best in cases where the removal of a certain state dimension leads to a larger decrease in the SAC baseline performance. For example, the Causal PBRS method outperforms the SAC baseline in the Hopper environment ex. State 1 and 2, where the SAC baseline does not surpass a score of 2000, versus Hopper ex. State 5, where the SAC baseline performance of around 3300 is comparable to the performance of the Hopper with full observational capabilities \cite{haarnoja2018soft}. However, we note there is a limit, given that when removing states 1-4 in the Hopper environment (which are the dimensions related to joint positions), both the Causal PBRS and other methods are unable to learn a meaningful policy. Unsurprisingly, the causally unaware CQL/TREX PBRS methods largely underperforms the SAC baseline (normalized mean of 0.85/0.96), given that the unobserved confounding in the offline reward signal leads to highly biased Q-value estimations. We note the recurrent SAC method is able to overcome the confounding bias for some of the simpler environments (Hopper-v5 and Halfcheetah-v5), although results are inconsistent. For more complex and higher dimensional environments (Ant-v5, Relocate-v1, and Walker-v5), the recurrent SAC baseline is unable to learn a useful policy. In each of the following sections, we provide an overview of some of the observation dimensions removed in each environment and the agent's performance in those environments. For full commentary on all the dimensions removed, see \cref{sec:full_commentary}.

\subsubsection{Hopper}

We removed the following dimensions:

\begin{enumerate}[label=\textbullet, leftmargin=20pt, topsep=0pt, parsep=0pt, itemsep=1pt]
    \item State Dim 1: The Hopper's torso angle. One of the termination conditions in Hopper is if the torso angle is bounded between $[-0.2, 0.2]$. Without this dimension, the agent is unable to know how to adjust its torso angle to maintain healthy body positions. The Causal PBRS can improve on the baseline by around $20\%$, whereas the other methods under perform the baseline by $15\%$ (CQL)  to  $73\%$ (Recurrent SAC).
    \item State Dim 2: Angle of the thigh joint. Removing this dimension has a large impact on the Hopper's performance, reducing its baseline return by $50\%$ vs the Hopper's performance with full obs capacity using SAC. The Causal PBRS method can drive a large improvement of almost $100\%$ vs the baseline. 
    \item State Dim 1,2,3,4: All of the angular positions of the Hopper's joints and torso. Without knowing the position of its joints, the Hopper agent is unable to learn a meaningful policy. The Causal PBRS method is similarly unable to learn a meaningful policy, suggesting a limit to the Causal PBRS's ability to improve performance in highly confounded environments. The CQL/Recurrent SAC method can generate a slightly better policy, although we note the overall return is still low. 
\end{enumerate}

\subsubsection{HalfCheetah}

We removed the following dimensions:
\begin{enumerate}[label=\textbullet, leftmargin=20pt, topsep=0pt, parsep=0pt, itemsep=1pt]
    \item State Dim 1: Angle of the front tip. Without the angle of the front tip (used to orient the Cheetah), the Half Cheetah agent's return drastically decreases from 12,000 to around 2,000. The Recurrent SAC method outperforms all other methods.
    \item State Dim 8: Velocity of the x-coordinate of the front tip. Unlike the Hopper, removing the forward velocity greatly reduces the HalfCheetah's return. The Causal PBRS method achieves a higher return vs the baseline policy, however the Recurrent SAC method outperforms all other methods. 
\end{enumerate}

\subsubsection{Walker2D}

We removed the following dimensions:
\begin{enumerate}[label=\textbullet, leftmargin=20pt, topsep=0pt, parsep=0pt, itemsep=1pt]
    \item State Dim 8: Velocity of the x-coordinate of the torso. Removing the forward velocity dimension of the Walker2d slightly decreases performance. The Causal PBRS method achieves a $20\%$ improvement over the second best method (SAC baseline).   
\end{enumerate}

\begin{figure}[ht]
    \centering
    \includegraphics[width=.9\columnwidth]{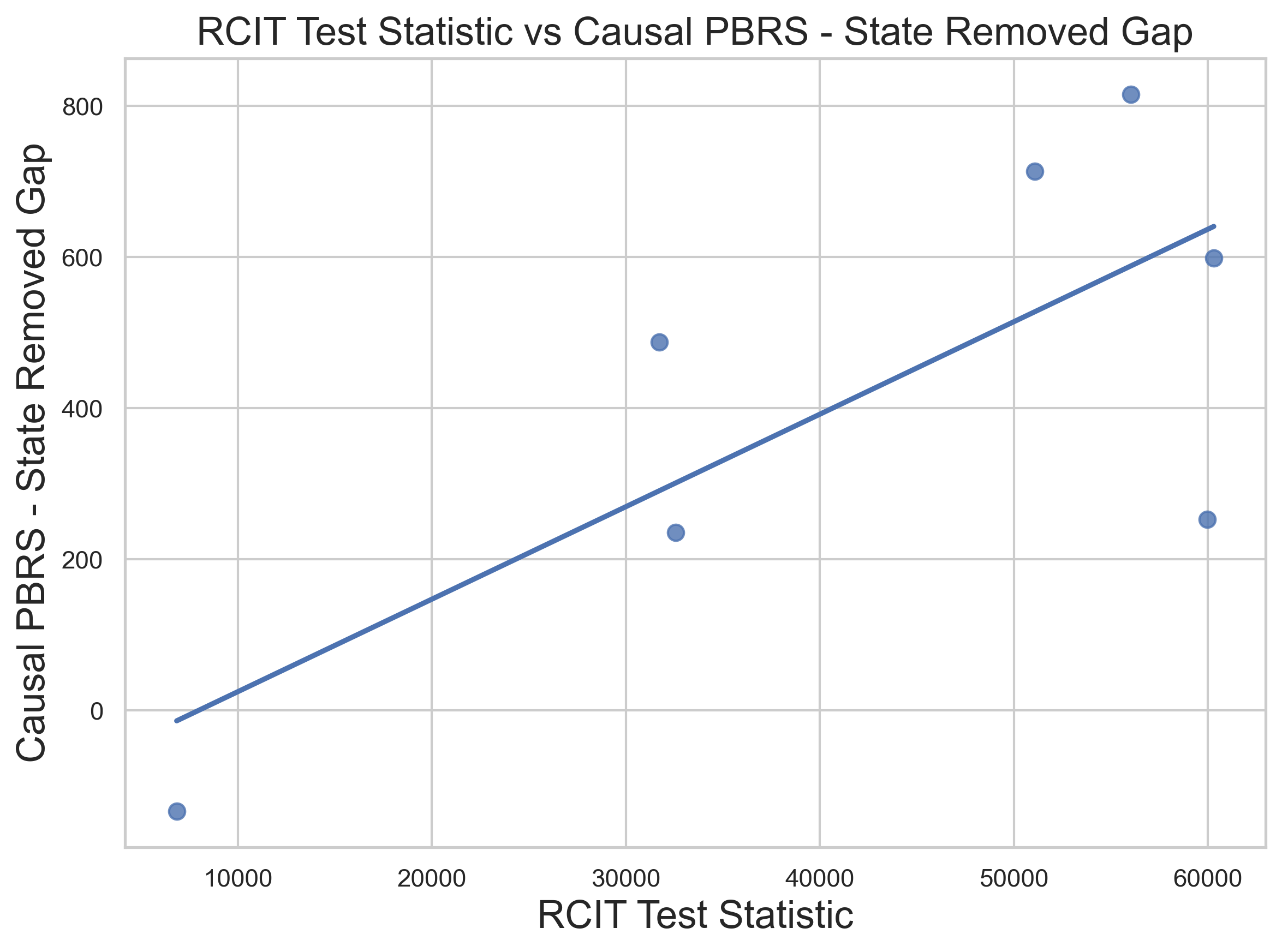} 
    \vspace{-0.1in}
    \caption{RCIT test statistic v.s. Causal PBRS improvements.}
    \Description{Relation Between RCIT test statistic and Causal PBRS improvements.}
    \label{fig:rcit_test}
    \vspace{-0.2in}
\end{figure}

\subsubsection{Ant}

We removed the following dimensions:
\begin{enumerate}[label=\textbullet, leftmargin=20pt, topsep=0pt, parsep=0pt, itemsep=1pt]
    \item State Dim 13: Velocity of the x-coordinate of the torso. Similar to other environments, the Causal PBRS method helps the Ant recover some of the performance loss from the loss of its forward velocity observation, unlike the CQL PBRS method.
     
\end{enumerate}

\subsubsection{Adroit Door}

We removed the following dimensions:
\begin{enumerate}[label=\textbullet, leftmargin=20pt, topsep=0pt, parsep=0pt, itemsep=1pt]
    \item State Dim 1: Angular position of the horizontal arm joint. The baseline agent and Causal PBRS agent perform comparably. The Causal PBRS method outperforms the baseline by $17\%$, and avoids catastrophic forgetting (see \cref{fig:ant_relocate_graphs} for details).
\end{enumerate}

\subsubsection{Adroit Relocate}
We removed the following dimensions:
\begin{enumerate}[label=\textbullet, leftmargin=20pt, topsep=0pt, parsep=0pt, itemsep=1pt]
    \item State Dim 36: x positional difference from the ball to the target. The overall performance is low, but the Causal PBRS method outperforms both the baseline and CQL PBRS method. 

\end{enumerate}

\subsection{Relation Between State Return Independence and Performance Gap}
\label{ci_v_performance}

To understand when the Causal PBRS method can out perform the SAC baseline without shaping, we plot the RCIT statistic from the conditional independence test of the selected state dimension and the remaining returns-to-go (conditioned on the remaining states and actions) from the Hopper-v5 Environment, and the average gap between the Causal PBRS method and the SAC baseline (\Cref{fig:rcit_test}). A higher test statistic implies that the observation dimension has a higher conditional dependence on the returns-to-go. In \Cref{fig:rcit_test}, as the test statistic increases, the gap between the Causal PBRS method and the SAC baseline increases, thus supporting our theory that the Causal PBRS method performs better when there is an increase in confounding bias. We also notice that when we remove dimensions necessary for the task (those with the highest test statistic), the learned causal shaping function may not be informative enough to help recover the performance. See \Cref{sec:further_rcit} for a more details. 

This independence analysis - in addition to exploring the causal model of the MuJuCo environments - suggests the Causal PBRS method performs best when the confounding variables have a direct causal effect on the reward. In the MuJoCo environments, the agent is rewarded for forward movement (known as the forward reward), which is a major determining factor of the total environment reward. In all four of the MuJoCo experiments, when we remove the x dimension velocity observation, a direct parent of the forward reward, the Causal PBRS method is either able to converge faster to an optimal policy or obtain a better policy than the baseline agent. On the other hand, in cases when the Causal PBRS method under performs (such as Ant dimension 16 - the x-coordinate angular velocity of the torso), the the removed dimension by itself does not have as strong an effect on the total reward / behavioral policy.

\subsection{Impact of Offline Data Quality on Online Training with Causal PBRS}

For our experiments, the Minari offline dataset contained three levels of expertise - "simple", "medium", and "expert". The results in \Cref{tab:state_removal_results} are based on training the optimistic state value functions with all three datasets. To evaluate how the expertise of the agent generating the offline data affects the Causal PBRS method, we trained three different causal reward shaping function, each on one of the Minari datasets (  \Cref{fig:offline_data_breakdown}). The Causal PBRS trained solely on the expert data performed the best. However, the Causal PBRS trained with the medium and simple datasets is still able to improve the agent's policy over the baseline. Notably, the performance of the combined datasets (Combined) is comparable to the Expert performance, albeit slightly slower to converge, suggesting \Cref{alg:causal-upper-bound} can converge well in practice, despite the differing data quality. 

\begin{figure}[ht]
    \centering
    \includegraphics[width=.9\columnwidth]{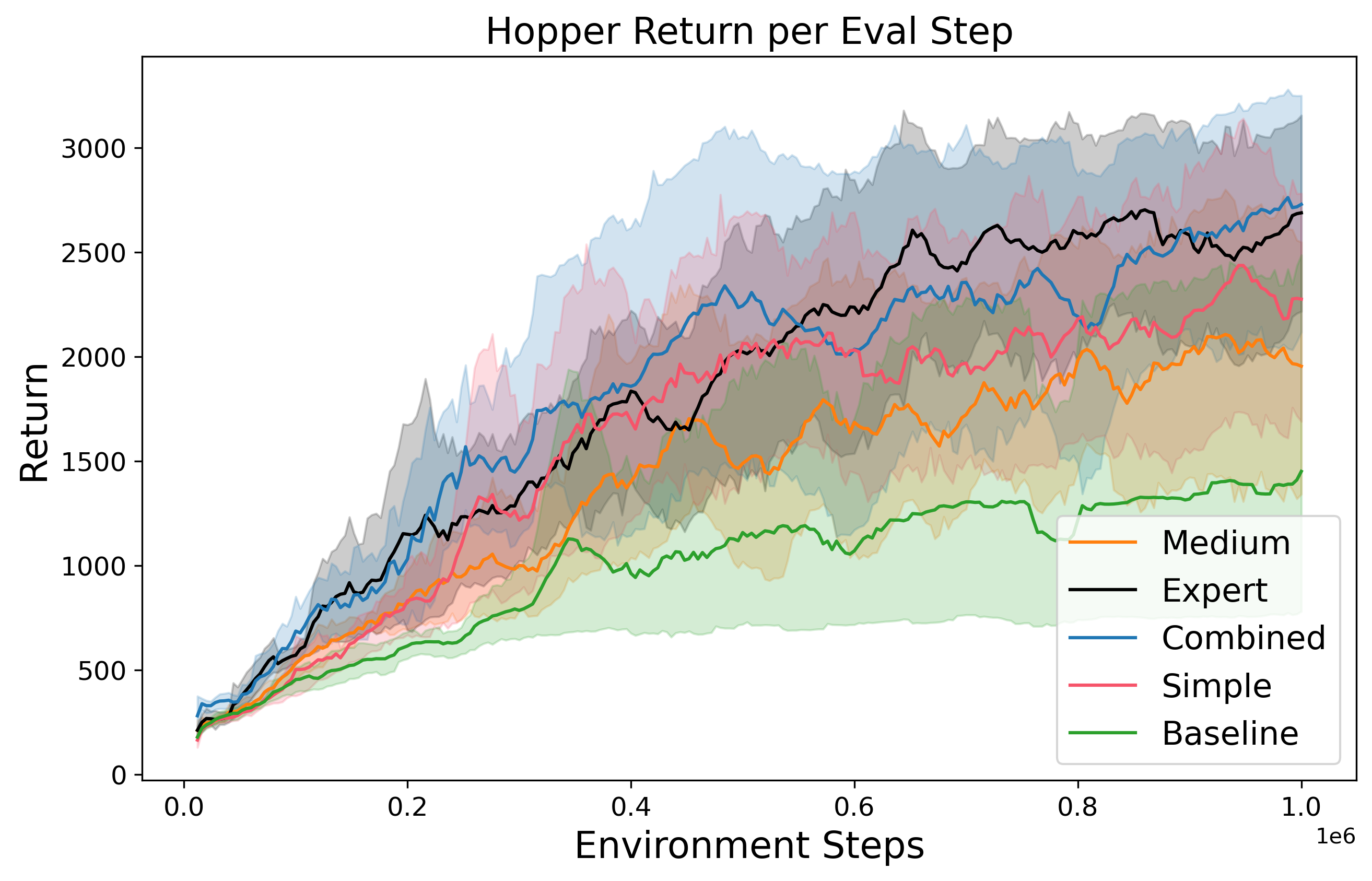} 
    \caption{Causal PBRS performance by offline data quality.}
    \vspace{-0.1in}
    \Description{Causal PBRS performance by offline data quality.}
    \label{fig:offline_data_breakdown}
\end{figure}

\section{Conclusion}
We introduce a causal framework for automatic reward shaping in high-dimensional continuous control with unobserved confounders. By extending the Causal Bellman Equation to continuous settings, our method learns optimistic state potentials that serve as principled shaping functions within the PBRS framework, improving learning efficiency while preserving policy optimality. We evaluate our approach on confounded MuJoCo and Adroit benchmarks, where it consistently outperforms unshaped and causally unaware strong baselines such as CQL, T-REX PBRS from the literature.
This work marks an important step toward confounding-robust reinforcement learning and causal reward design for real-world continuous control. It paves the way for several interesting future directions including: 1) tighter Causal Bellman upper bound given more prior knowledge, 2) testing the Causal PBRS method in more diverse confounding settings and under higher-dimensional image observations, and 3) exploring adaptive Causal PBRS for environments with less confounding bias \footnote{see \cref{sec:future_direct} for an extended discussion.}.

\section*{Acknowledgments}
This research is supported in part by the NSF, ONR, AFOSR, DoE, Amazon, JP Morgan, and The Alfred P. Sloan Foundation.

\clearpage
\bibliography{bibwithID,extra}
\bibliographystyle{ACM-Reference-Format}

\clearpage
\onecolumn
\section*{Appendices}
\label{sec:appendix}
\DoToC
\section{Proof Details}
\label{sec:proof details}
Here we present the proof details of theorems and propositions in the main text.

\subsection{Proof for Thm. \ref{thm:causal bellman eq} Causal Bellman Optimal Equation for Stationary Infinite-Horizon CMDPs}
\begin{proof}
Starting from the Bellman Optimal Equation, the optimal state value function is given by,
    \begin{align}
        V^*(s) &= \max_x R(s,x) + \gamma\sum_{s'} T(s,x,s') V^*(s')
    \end{align}
    Note that the actions here are done by an interventional agent, which is actually $\interv{x}$ in the context of a CMDP. We swap in the causal bounds for interventional reward and transition distribution, 
    \begin{align}
        V^*(s) &\leq \max_x \bigg[\widetilde{R}(s,x)P(x|s) + bP(\neg x|s) + \gamma\sum_{s'} \widetilde{T}(s,x,s')P(x|s)V^*(s') + P(\neg x|s)\max_{s''}V^*(s'')\bigg]
    \end{align}
    where $\widetilde{\mathcal{R}}(s,x) = \mathbb{E}[Y|S=s, X=x]$, $\widetilde{\mathcal{T}}_h$ is shorthand for $\widetilde{\mathcal{T}}(s,x,s') = P(S'=s'|S=s, X=x)$ and $P(x|s) = P(X = x|S = s)$ are estimated from the offline dataset. And $b$ is a known upper bound on the reward signal, $Y \leq b$. In this step, we upper bound the next state transition by assuming the best case that for the action not taken with probability $P(\neg x|s)$, the agent transits with probability $1$ the best possible next state, $\max_{s''}V^*_{}(s'')$. After rearranging terms, we have,
    \begin{align}
        V^*(s) &\leq \max_x \left[ P(x|s) \left( \widetilde{\mathcal{R}}(s,x) + \gamma \sum_{s'} \widetilde{T}(s,x,s'){V}^*(s') \right) + P(\neg x|s) \left( b + \gamma \max_{s''}{V}_{}^*(s'') \right) \right]    
    \end{align}
    And optimizing the value function w.r.t this inequality gives us an upper bound on the optimal state value,
    \begin{align}
        \overline{V}(s) &\leq \max_x \left[ P(x|s)  \left( \widetilde{\mathcal{R}}(s,x) +  \gamma\sum_{s'} \widetilde{T}(s,x,s')\overline{V}_{}(s') \right) + P(\neg x|s) \left( b + \gamma\max_{s''}\overline{V}_{}(s'') \right) \right].\label{eq:bellman update inequality} 
    \end{align}
\end{proof}

\subsection{Proof for Thm. \ref{thm:causal bellman convergence} Convergence of Causal Bellman Optimal Equation}
\begin{proof}
    We will first show that the following Causal Bellman Optimality operator (will denote as ``the operator" or $B$ below for simplicity) is a contraction mapping with respect to a max norm. 
    Then by Banach's fixed-point theorem~\citep{BanachSurLO}, this operator has a unique fixed point, and updating any initial point iteratively will converge to it. Then we show that this unique fixed point is indeed a lower bound of the optimal interventional Q-value.
    
    Let the operator $B$ be,
    \begin{align}
        B\overline{V^{}}(s, x) =  \max_x \bigg[ P(x\mid s)\Big (\widetilde{\1R}\Parens{s, x} + \gamma \sum_{s', x'} \widetilde{\1T}\Parens{s, x, s'} \overline{V}(s') \Big ) 
		+ P(\neg x \mid s) \Big (b + \gamma \max_{s'}\overline{V}(s') \Big ) \bigg].
    \end{align}
    For arbitrary value bounds, $\overline{V^1}, \overline{V^2}$, let their initial difference under max-norm be $c = \left\| \overline{V_{}^1} - \overline{V_{}^2}\right\|_\infty \geq 0$. We can bound their difference after one step update by,
    \begin{align}
        \left\| B\overline{V_{}^1} - B\overline{V_{}^2}\right\|_\infty &\leq 
        \gamma \max_{s,x} \left[P(x|s) \sum_{s'} \widetilde{T}(s,x,s')\left( \overline{V_{}^1}(s') - \overline{V^2}(s')\right) + P(\neg x|s) \max_{s'} \left( \overline{V_{}^1}(s') - \overline{V^2}(s')\right) \right].
    \end{align}
    Thus, under the operator $T$, we have non-expansion Q-value differences,
    \begin{align}        
        \left\| B\overline{V_{}^1} - B\overline{V_{}^2}\right\|_\infty &\leq 
        \gamma \max_{s,x} \left[P(x|s) \sum_{s'} \widetilde{T}(s,x,s')\left( \overline{V_{}^1}(s') - \overline{V^2}(s')\right) + P(\neg x|s) \max_{s'} \left( \overline{V_{}^1}(s') - \overline{V^2}(s')\right) \right]\\
        &\leq 
        \gamma c \max_{s,x} \left( P(x|s) \sum_{s'} \widetilde{T}(s,x,s') + P(\neg x|s)\right),\\
        &=
        \gamma c.
    \end{align}
    for all $\overline{V^1}, \overline{V^2}$ satisfying $ \left\| B\overline{V_{}^1} - B\overline{V_{}^2}\right\|_\infty \leq c, c\geq 0$. Thus, $T$ is a contraction mapping with respect to the max norm. And there exists a unique fixed point $\overline{V^{*}}$ when we apply this operator $B$ iteratively to an arbitrary state value vector till convergence. 
    
    We then show that this fixed point is indeed an upper bound to the optimal interventional state values. By the backup rule of $B$ (\cref{eq:bellman update inequality}), $\forall V(s), V(s)\leq BV(s)$. Thus, for the optimal state value, we can have $V^*(s) \leq \lim_{k\to \infty}B^kV^*(s) = \overline{V^*}(s)$ where $B^k$ denotes applying causal bellman backup $B$ iteratively for $k$ times. This concludes the proof.
\end{proof}

\subsection{Proof for Prop. \ref{prop:pbrs cmdp}}
\begin{proof}
    Because CMDP also enjoys the Markov property, the overall proof procedure highly resembles the original one in \citet{pbrs}. Only to note that the optimal policy invariance is proved in the online learning sense, which is between CMDP $\mathcal{M}'_{\pi}$ after reward shaping under policy $\pi$ and the original CMDP $\mathcal{M}_{\pi}$ under policy $\pi$.
\end{proof}

\section{Environments and Offline Dataset}
\label{sec:env and offline dataset}

\Cref{fig:environments_vis} provides visualization of the six environments tested. 

For offline datasets, the MuJoCo tasks use offline data generated by three different policies of varying expertise (simple-v0, medium-v0, expert-v0). The Adroit tasks use mixed offline data generated by human demonstrators (human-v2), an "expert" fine-tuned RL policy (expert-v2), and an imitation policy of the Human and Expert policies (cloned-v2). 

\begin{figure}[H]
    \centering

    \begin{subfigure}[t]{0.15\textwidth}
        \centering
        \includegraphics[width=\linewidth]{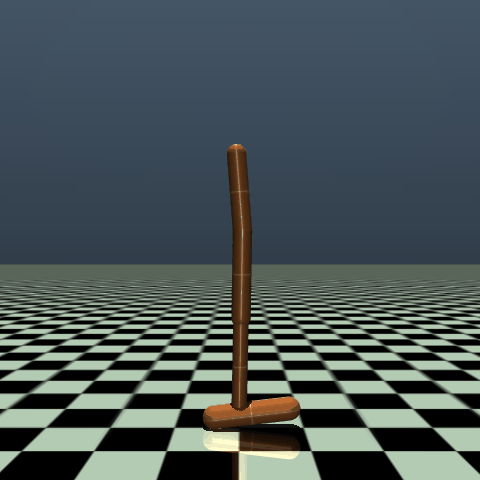}
        \caption{Hopper}
    \end{subfigure}
    \begin{subfigure}[t]{0.15\textwidth}
        \centering
        \includegraphics[width=\linewidth]{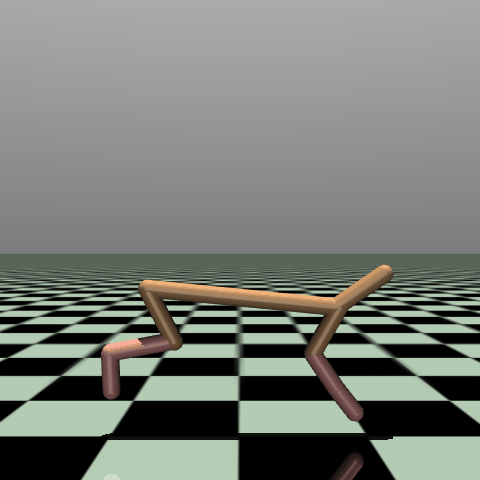}
        \caption{HalfCheetah}
    \end{subfigure}
    \begin{subfigure}[t]{0.15\textwidth}
        \centering
        \includegraphics[width=\linewidth]{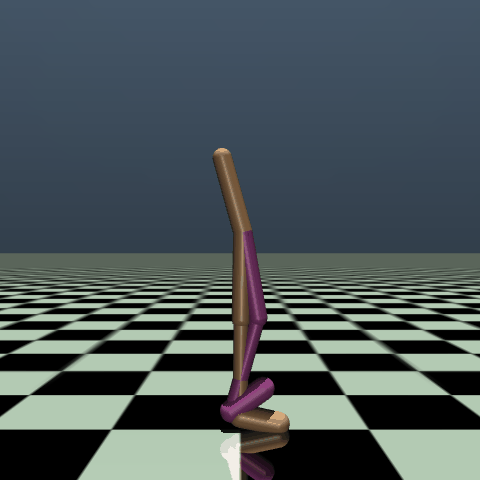}
        \caption{Walker2d}
    \end{subfigure}

    \vspace{0.1cm} 
    
    \begin{subfigure}[t]{0.15\textwidth}
        \centering
        \includegraphics[width=\linewidth]{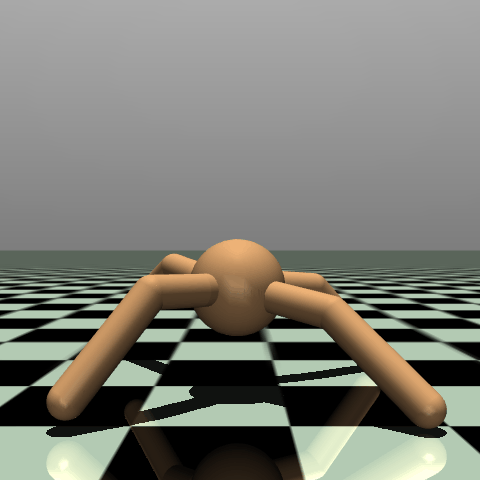}
        \caption{Ant}
    \end{subfigure}
    \begin{subfigure}[t]{0.15\textwidth}
        \centering
        \includegraphics[width=\linewidth]{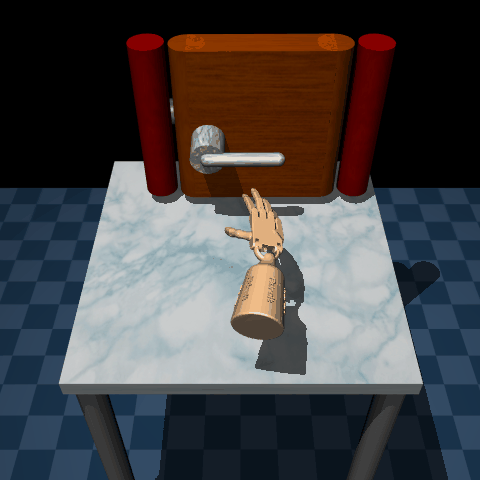}
        \caption{Adroit Door}
    \end{subfigure}
    \begin{subfigure}[t]{0.15\textwidth}
        \centering
        \includegraphics[width=\linewidth]{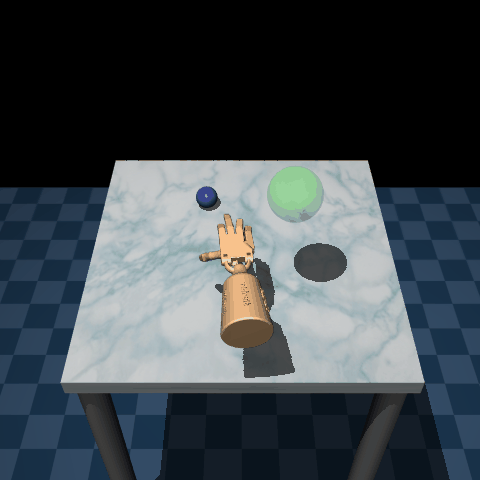}
        \caption{Adroit Relocate}
    \end{subfigure}

    \caption{Visualizations of the Six Environments Tested.}
    \label{fig:environments_vis}
    \Description{Visualizations of the Five Environments Tested.}
\end{figure}

\begin{table}[H]
\centering
\caption{Offline Dataset Size}
\begin{tabular}{l|l}
\toprule
\textbf{Environment} & \textbf{Trajectory Size} \\
\midrule
Hopper-v5 & 2,997,774 \\
HalfCheetah-v5 & 3,000,000 \\
Walker2d-v5 & 2,998,745 \\
Ant-v5 & 3,998,498 \\
AdroitHandDoor-v1 & 2,009,942 \\
AdroitHandRelocate-v1 & 2,006,729 \\

\bottomrule
\end{tabular}
\end{table}

Data pulled from the Minari data repository \citep{minari}. The Hopper, HalfCheetah, and Walker2d offline datasets roughly have the same split of Simple, Medium, and Expert policies. The Ant dataset is $1/2$ Expert data, with the remaining data evenly split between the Simple and Medium policies. The Door and Relocate datasets have 1m observations from the Expert and Cloned policies, with the remaining coming from Human demonstrators. 

\section{Hyper Parameters}

\subsection{Offline Hyper Parameters}

For CQL, we used the baseline implementation and training hyperparameters in \cite{huang2022cleanrl}, trained the Q-value function for 1M timesteps, and used a batch size of 1028. For the Causal Upper Bounded State Value functions, we trained an environment model for 50 epochs to estimate the parameters for ${\theta_2}$ and $\theta_3$ and then trained \Cref{alg:causal-upper-bound} for 200 epochs, with a batch size of 1028. This is roughly equivalent to 600k timesteps in environments with 3M trajectories. To facilitate convergence, we mean-normalized the offline rewards. 

\begin{table}[h]
\centering
\caption{Hyperparameters for Offline Training Causal Upper Bounded State Value Functions}
\begin{tabular}{l|l}
\toprule
\textbf{Hyperparameter} & \textbf{Value} \\
\midrule
Environment Model Training Epochs & 50 \\
Causal Upper Bound Model Training Epochs & 200 \\
Optimizer & Adam \\
Batch Size & 1028 \\
Policy $\theta_2$ Learning Rate & 1e-4 \\
State Transition $\theta_3$ Learning Rate & 1e-5 \\
Q Learning Rate & 1e-4 \\
Discount Factor & 0.99 \\
Target Network Update $\tau$ & 0.005 \\
Target Update Interval & 1 \\
Policy Training Frequency & 3 \\
Num Hidden Dim & 128 \\
Num Residual Blocks & 3 \\

\bottomrule
\end{tabular}
\end{table}

\subsection{Online Hyper Parameters}

\begin{table}[ht]
\centering
\caption{Hyperparameters for Online Training SAC}
\begin{tabular}{l|l}
\toprule
\textbf{Hyperparameter} & \textbf{Value} \\
\midrule
Training Steps & 1e6 (MuJoCo), 3e6 (Adroit)\\
Optimizer & Adam \\
Batch Size & 512 \\
Policy Learning Rate & 3e-4 \\
Q Learning Rate & 1e-3 \\
Alpha & 0.2 \\
Discount Factor & 0.99 \\
PBRS Discount Factor & 1 \\
PBRS $\beta$ & See \Cref{table:beta_hyperparams} \\
Target Network Update $\tau$ & 0.005 \\
Target Update Interval & 1 \\
Policy Training Frequency & 2 \\
Gradient Steps & 1\\
Num Hidden Dim & 256 \\
Num Residual Blocks & 2 \\
Recurrent SAC Batch Size & 16 \\
Recurrent SAC Step Size & 50 \\

\bottomrule
\end{tabular}
\end{table}

To tune the hyperparameter $\beta$ as detailed in \Cref{experiment_design_section}, we tested the following values: 1, 0.1, 0.01, and 0.001, to find the $\beta$ that resulted in the best performance at the overall environment level. \Cref{table:beta_hyperparams} includes the $\beta$ value used for each environment and method. Given that the agent's performance is sensitive to $\beta$, the performance of an agent using Causal or CQL PBRS methods might be improved by further hyper-tuning or optimizing $\beta$ (instead of picking one $\beta$ for the whole environment). 

\begin{table}[ht]
\label{table:beta_hyperparams}
\centering
\caption{\textbf{SAC PBRS Scaling Factors}}
\begin{tabular}{l|c|c|c}
\toprule
\textbf{Environment} & \textbf{Causal Beta} & \textbf{CQL Beta} & \textbf{T-REX Beta} \\
\midrule
Hopper-v5 & 0.1 & 0.1 & 1.0 \\
HalfCheetah-v5 & 0.001 & 0.001 & 1.0 \\
Walker2d-v5 & 0.1 & 0.01 & 1.0 \\
Ant-v5 & 0.001 & 0.001 & 1.0 \\
AdroitHandDoor-v1 & 0.1 & 0.1 & 1.0 \\
AdroitHandRelocate-v1 & 0.1 & 0.1 & 1.0\\

\bottomrule
\end{tabular}
\end{table}

\section{Relation between Conditional Independence and Performance}
\label{sec:further_rcit}
We looked at the relation between the RCIT test statistic (measure of dependence of selected state dimension and returns-to-go, conditioned on remaining states and actions) and performance of the State Removed Baseline and Causal PBRS method. Note, a higher RCIT test statistic implies a higher dependence between the two selected variables, conditioned on the remaining variables. We use results from the Hopper-v5 with the following re dimensions in the Hopper Environment - 1, 2, 3, 5, 6, 7, 1 - 4, 5 - 6, 7 - 10. State Removed Gap is defined as the average eval return gap between the State Removed SAC Baseline and a full state SAC Hopper agent. The Causal Gap is defined as the average eval return gap between the State Removed SAC Baseline and the Causal PBRS method. As seen in \Cref{fig:state_removed_rcit}, there is a negative correlation between an increase in conditional dependence, and Hopper performance degradation. Similarly, as seen in \Cref{fig:rcit_causal_full}, there is a positive relationship between an increase in the RCIT test statistic and gap between the Causal PBRS and State Removed Baseline - but only up until a point. For certain dimensions that have a very high dependence, the causal upper bounded state value function might not be as informative, and hence a less improvement when compared the to State Removed Baseline.

\begin{figure}[H]
    \centering
    \begin{subfigure}[b]{0.48\columnwidth}
        \centering
        \includegraphics[width=\textwidth]{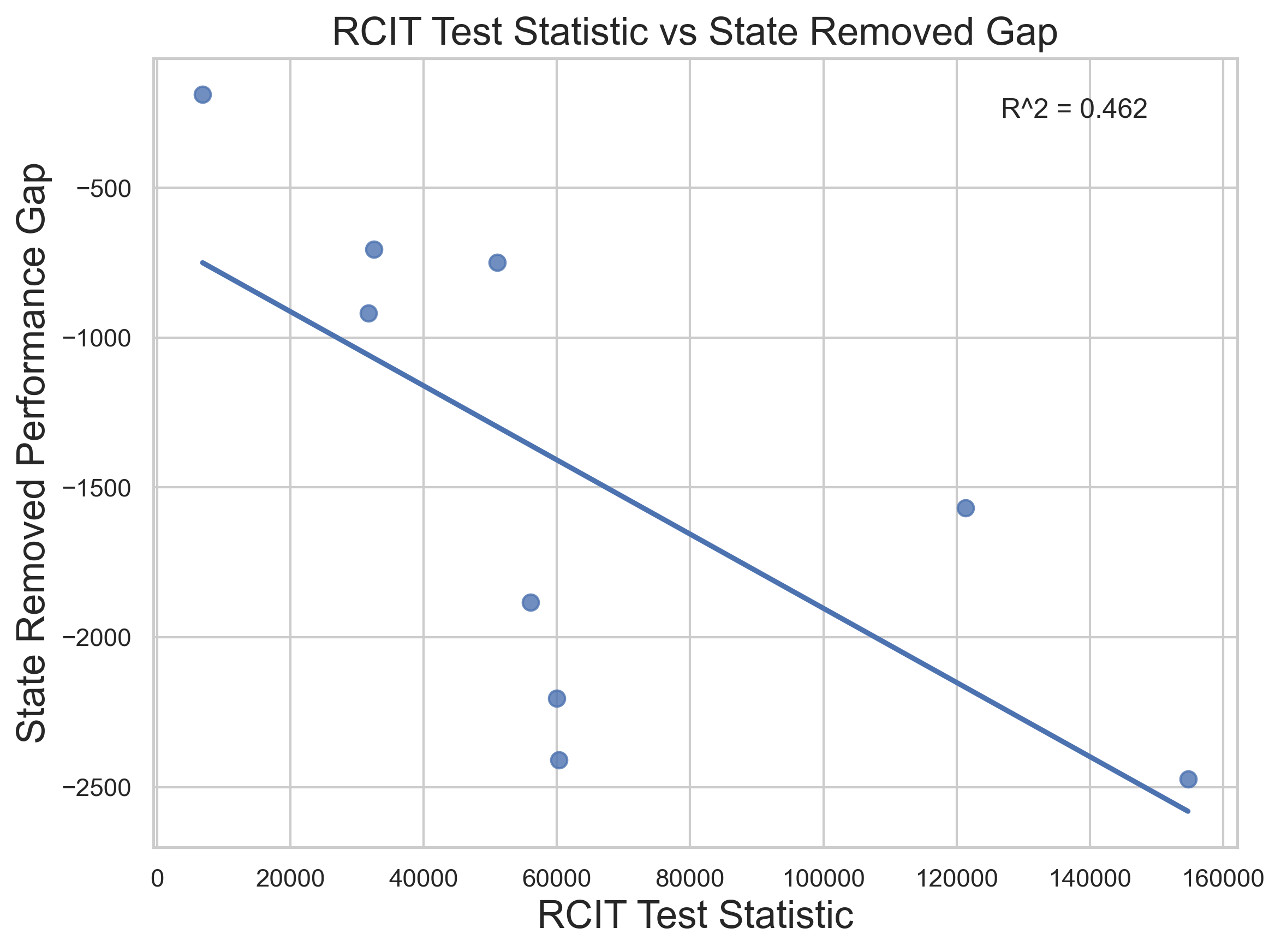}
        \caption{Relation Between RCIT test statistic and State Removed Baseline Degradation in Hopper-v5.}
        
        \label{fig:state_removed_rcit}
    \end{subfigure}
    \hfill
    \begin{subfigure}[b]{0.48\columnwidth}
        \centering
        \includegraphics[width=\textwidth]{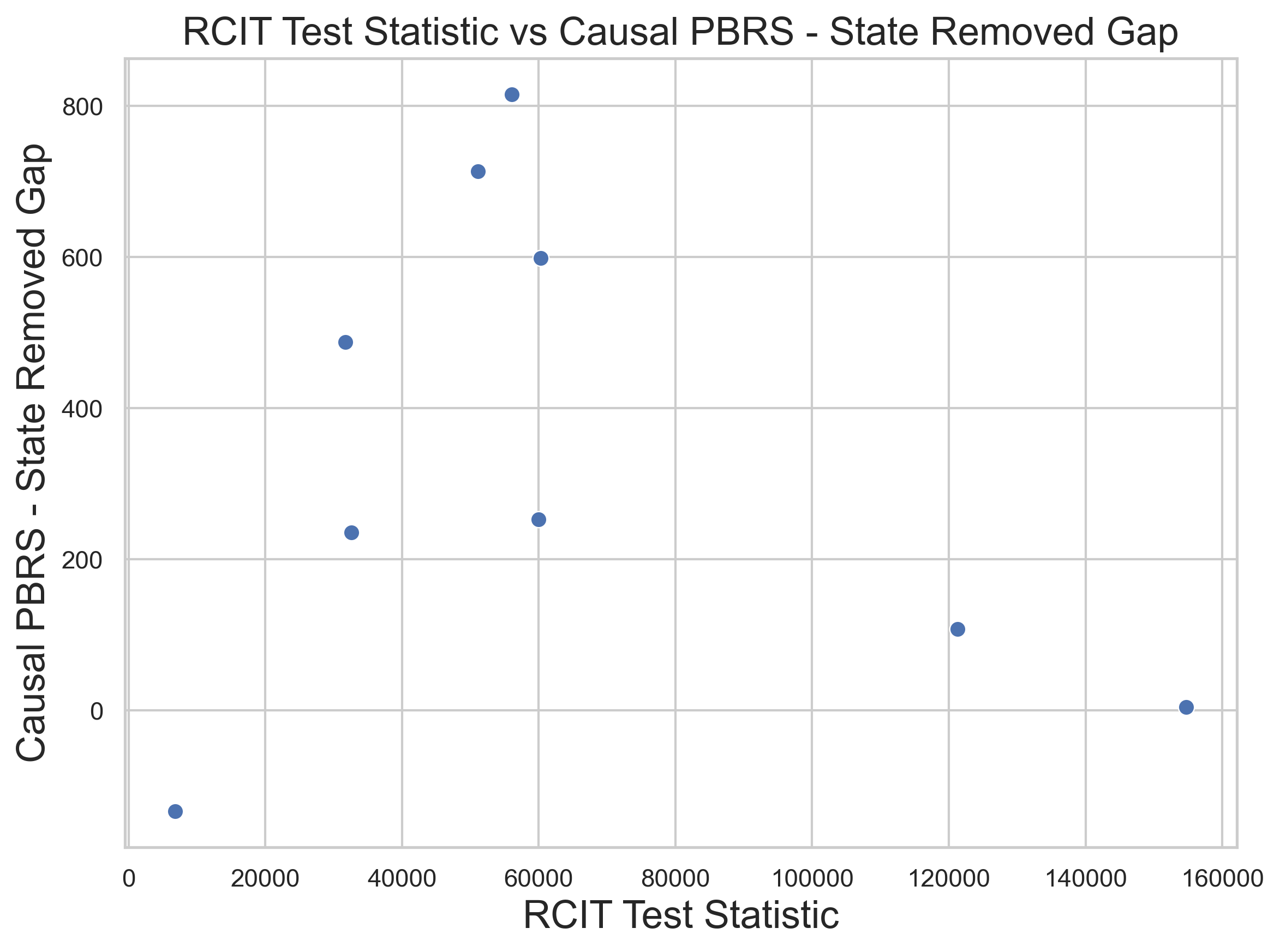}
        \caption{Relation Between RCIT test statistic and Causal PBRS Improvement vs State Removed Baseline.}
        \label{fig:rcit_causal_full}
    \end{subfigure}

    \caption{Relations between RCIT test statistic, baseline degradation, and causal PBRS improvement in Hopper-v5.}
    \Description{Relations between RCIT test statistic, baseline degradation, and causal PBRS improvement in Hopper-v5.}
    \label{fig:rcit_combined}
\end{figure}

\section{Future Directions and Challenges}
\label{sec:future_direct}
This work takes a step towards improving reinforcement learning in confounded, continuous control environments and causal reward design for real-world continuous control. In this section, we discuss current limitations, challenges, and consequently future directions.

First, estimating the value of the "road-not-taken" to adjust for potential confounders in \Cref{thm:causal bellman eq} is not trivial. In particular, deciding which actions $\neg X$ to take is challenging given that in some offline datasets, there may not be enough action policy variance to 1) determine reasonable $\neg X$ actions and 2) understand the state transition probability given state $s$ and action $\neg X$. These challenges lead to the first potential future direction of the paper, that is, exploring better ways to estimate the Causal Bellman equation. One potential direction is to explore additional parametrization techniques to estimate the components of the causal bellman equation (estimating $P(x|s)$, sampling $~X$, estimating the state transition model, etc). Another potential solution is to include datasets generated by a variety of agents with varying capability (\cite{li2025automatic}) for offline dataset collection. Adding and testing the Causal PBRS method's efficacy with a wider range of policy levels could also deepen the understanding of how much expert data is needed to learn effective potentials. 

The second potential direction is testing the Causal PBRS method in more diverse confounding settings and in higher-dimensional image based settings. In this paper, we focused on settings where certain dimensions are masked, as this is straightforward way to simulate unobserved confounders and representative of real world scenarios (where an agent needs to learn from a more sophisticated agent with access to richer sensory dimensions unavailable to the simpler agent). But we acknowledge that there are other types of confounded real world environments the Causal PBRS method could be tested in like environments with strong sensor noises and actuator noises. The other main challenge of this work was selecting the appropriate scaling factor, $\beta$, during the online training phase. Implementing methods to automatically tune or optimize $\beta$ could lead to a large performance improvement in the Causal PBRS method. 

Finally, we observed the Causal PBRS method performs better when confounding biases affect the rewards strongly. Therefore, one last direction is exploring how to learn a tighter Causal PBRS in environments with weaker confounding bias.

\section{Creating Confounded Environments Through Observation Masking}
\label{sec:proving_confounding}

To demonstrate confounding bias in the offline data when masking dimensions between the action and state transition (as depicted in \cref{fig:cmdp offline}), we run two RCIT tests. First, $s^t_h \perp s^{t+1}_{-h} | x^t,s^t_{-h}$ where h is the hidden dimension, and  second, $s^t_h \perp x^t |s^t_{-h}$ for each dimension in Hopper-v5. For all tests, we rejected the null hypothesis (i.e. the hidden states are not conditionally independent with either the behavior policy or state transition). This suggests that the masking method succeeds in creating confounding bias. \Cref{fig:combined_rcit_test_stats} shows the different test statistics of the RCIT test per dimension. As a comparison point, dimension -1 is where we set $s^t_{-h}$ equal to random noise. We see all test statistics are much larger than the random noise test statistic.

\begin{figure}[hbt]
    \centering

    \begin{subfigure}[b]{0.48\textwidth}
        \centering
        \includegraphics[width=\linewidth]{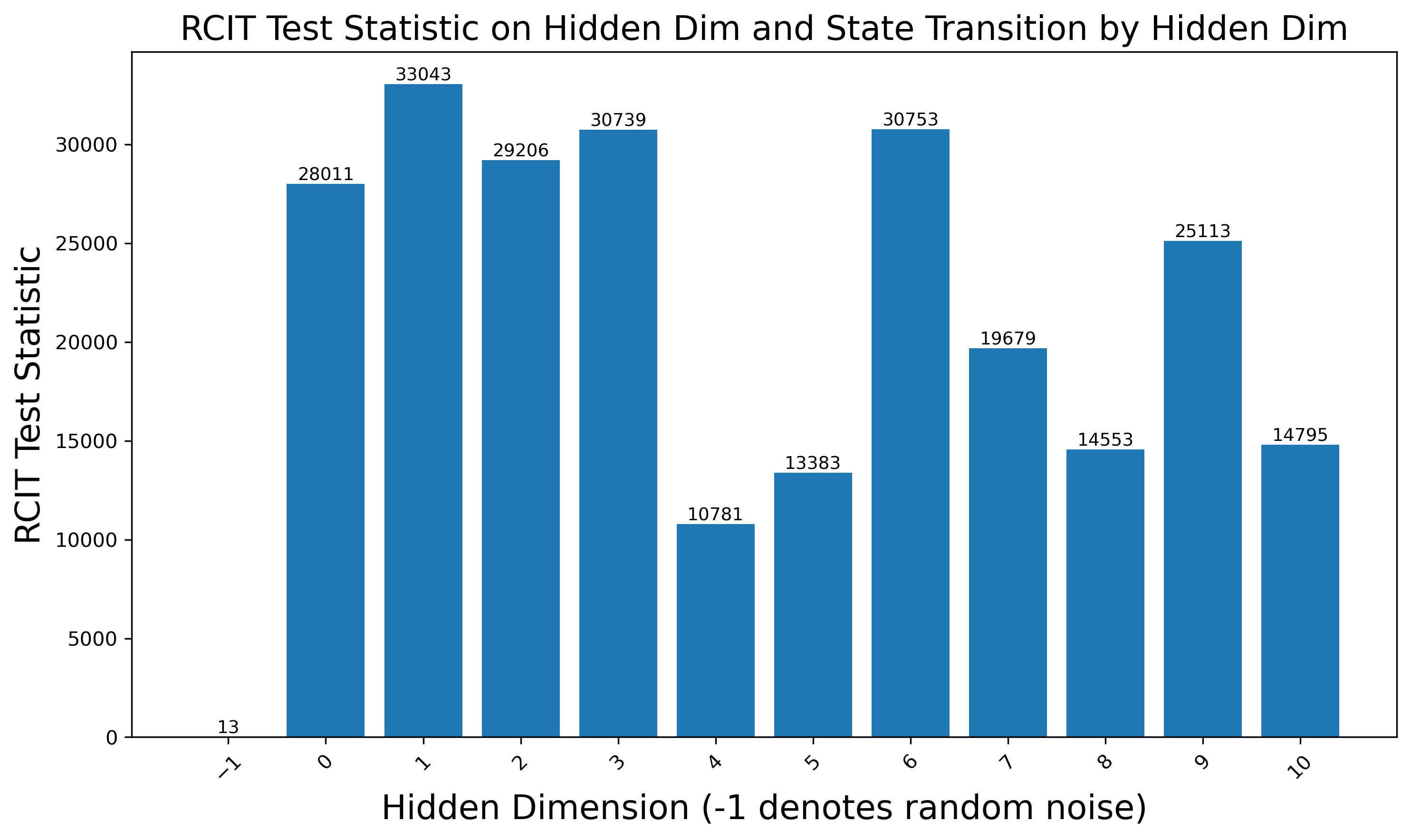}
        \caption{State Transition Independence Tests}
        \label{fig:state_transition_rcit}
    \end{subfigure}
    \hfill
    \begin{subfigure}[b]{0.48\textwidth}
        \centering
        \includegraphics[width=\linewidth]{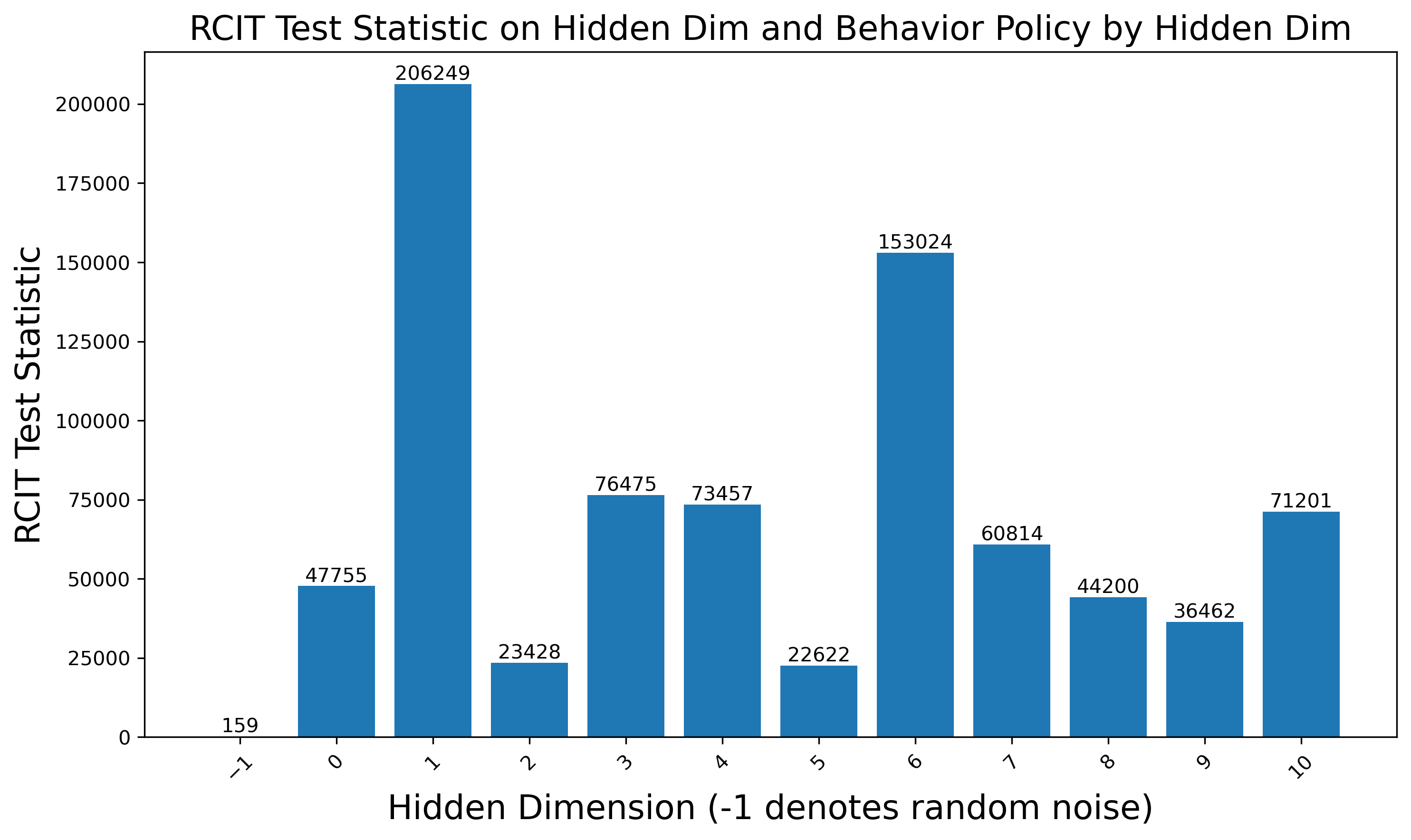}
        \caption{Behavior Policy Independence Tests}
        \label{fig:action_rcit}
    \end{subfigure}

    \caption{RCIT Test Statistics to Test Confounding Bias}
    \label{fig:combined_rcit_test_stats}
\end{figure}

\section{Additional Experiment Results and Tables}
\label{sec:full_graphs_and_tables}

\begin{figure}[H]
    \centering
    \includegraphics[width=.9\textwidth]{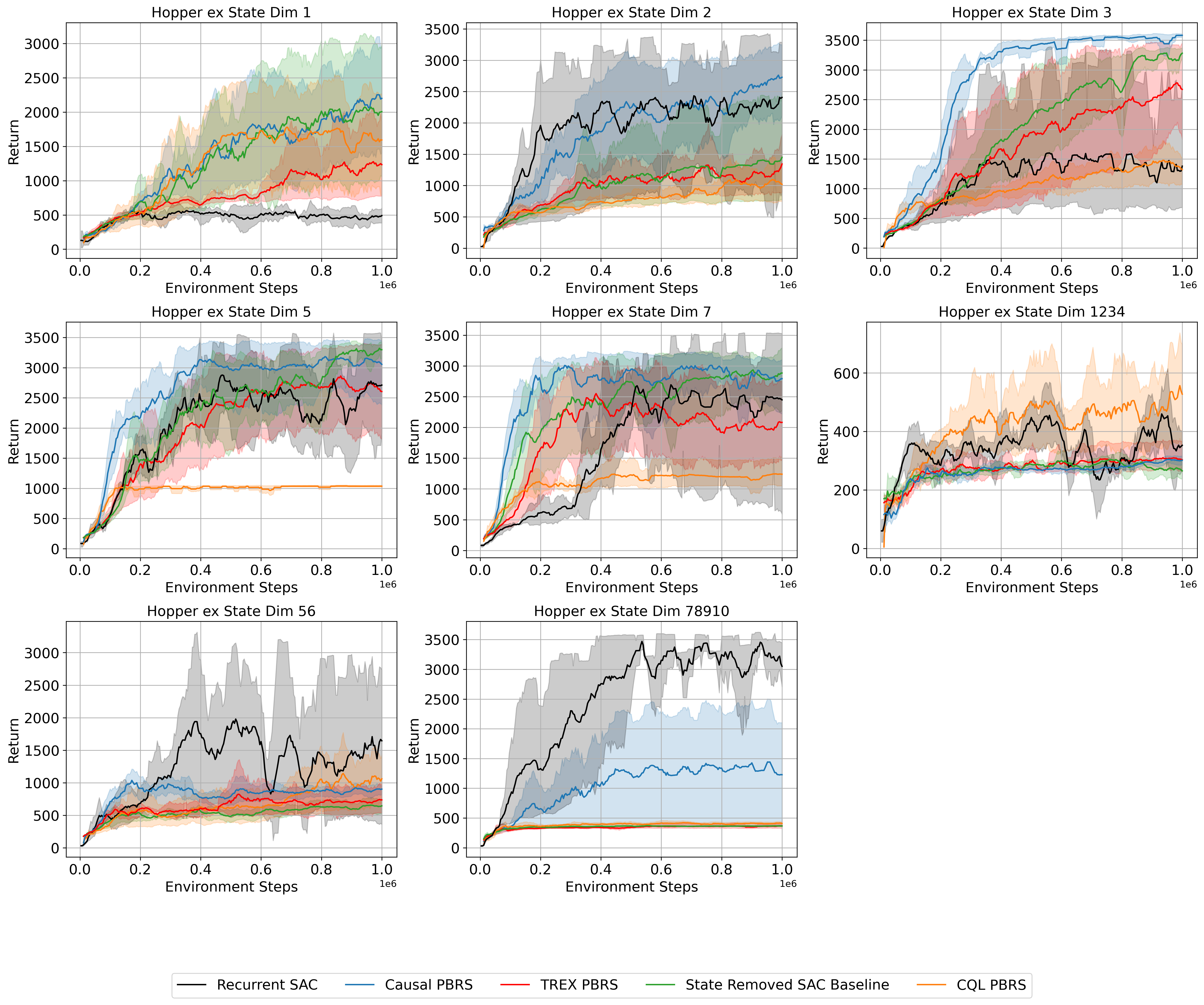} 
    \caption{Confounded Hopper-v5 Experiment Results
    }
    \Description{Confounded Hopper-v5 Experiment Results}
    \label{fig:hopper_graphs}
\end{figure}

\begin{figure}[H]
    \centering
    \includegraphics[width=.9\textwidth]{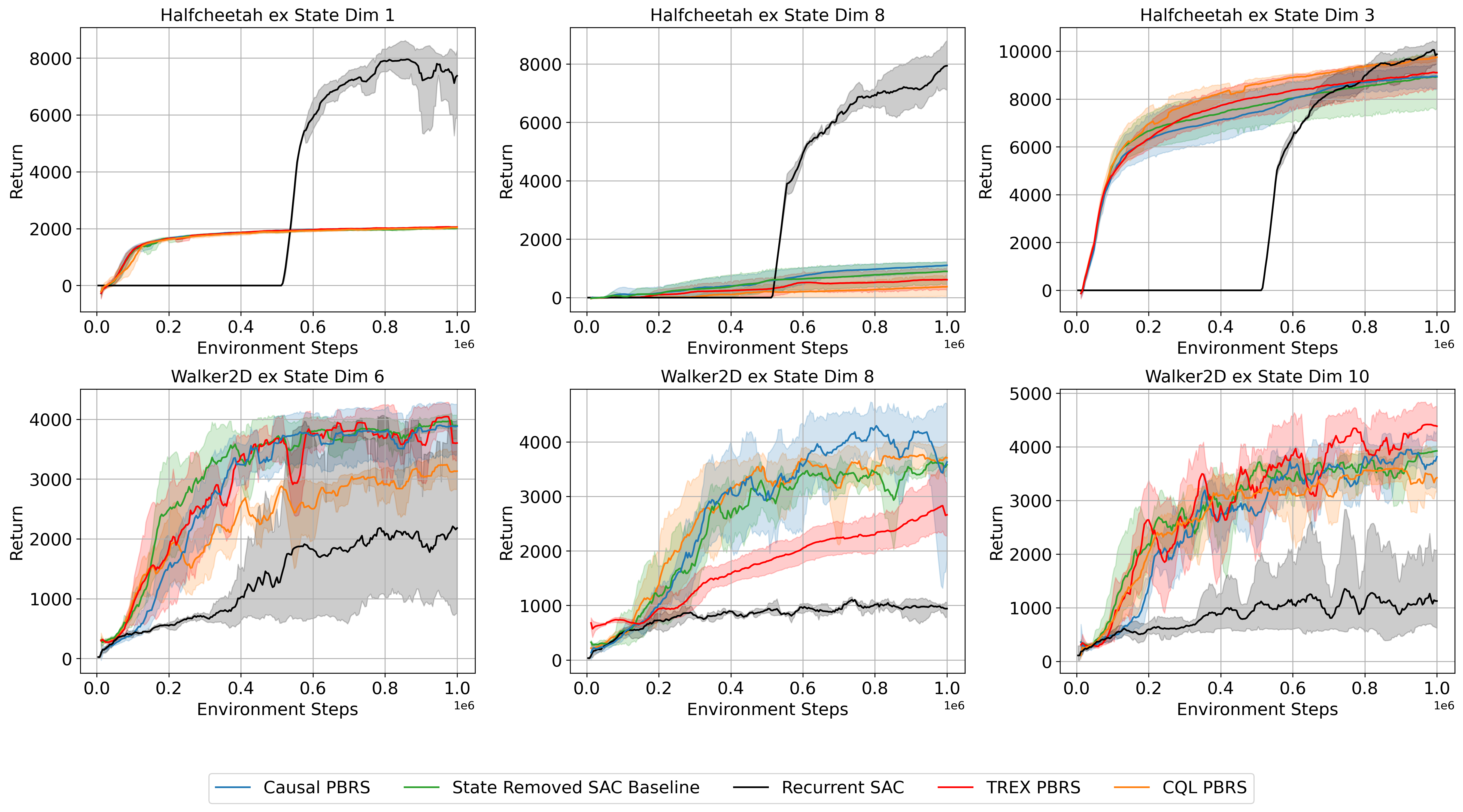} 
    \caption{Confounded HalfCheetah-v5 and Walker2d-v5 Experiment Results
    }
    \Description{Confounded HalfCheetah-v5 and Walker2d-v5 Experiment Results}
    \label{fig:halfcheetah_walker_graphs}
\end{figure}

\begin{figure}[H]
    \centering
    \includegraphics[width=.9\textwidth]{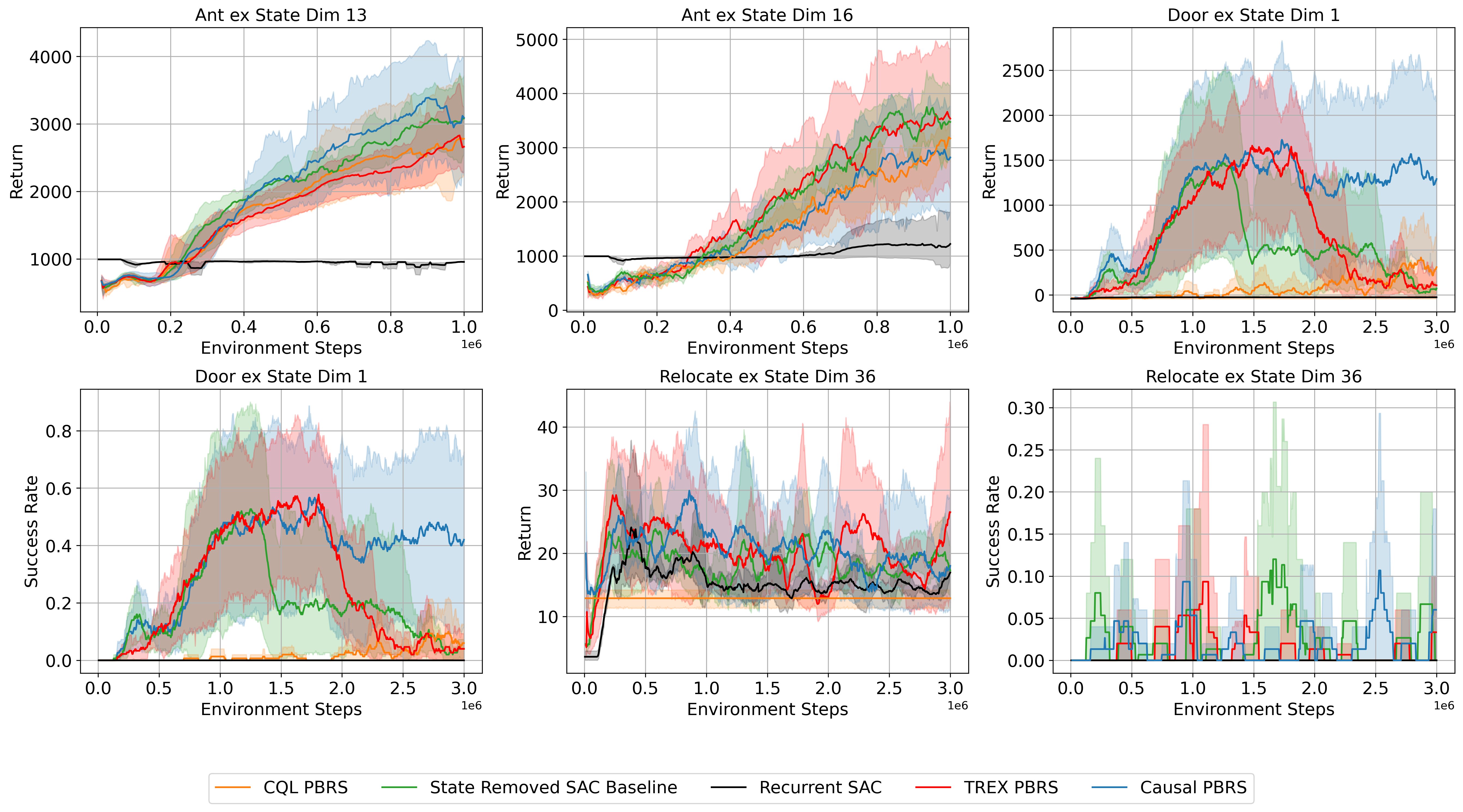} 
    \caption{Confounded Ant-v5, AdroitHandDoor-v1, AdroitHandRelocate-v1 Experiment Results (Returns)
    }
    \Description{Confounded Ant-v5, AdroitHandDoor-v1, AdroitHandRelocate-v1 Experiment Results}
    \label{fig:ant_relocate_graphs}
\end{figure}

\begin{table*}[t]
\centering
\caption{Timesteps of when Baseline and Causal PBRS agent reach best eval}
\label{tab:step_comparison_results}
\adjustbox{max width=\textwidth}{
\begin{tabular}{l|c|c|c}
\toprule
Environment & State Dim Removed & Baseline Steps & Causal Steps \\
\midrule

\multirow{8}{*}{Hopper-v5} 
 & 1 & 944,000 & 988,000 \\
 & 2 & 1,000,000 & 988,000 \\
 & 3 & 948,000 & 1,000,000 \\
 & 5 & 992,000 & 812,000 \\
 & 7 & 904,000 & 764,000 \\
 & 5,6 & 964,000 & 172,000 \\
 & 1,2,3,4 & 780,000 & 960,000 \\
 & 7,8,9,10 & 552,000 & 952,000 \\

\midrule
\multirow{3}{*}{Halfcheetah-v5} 
 & 1 & 1,000,000 & 988,000 \\
 & 3 & 1,000,000 & 996,000 \\
 & 8 & 996,000 & 1,000,000 \\

\midrule
\multirow{3}{*}{Walker2D-v5} 
 & 6 & 984,000 & 952,000 \\
 & 8 & 956,000 & 804,000 \\
 & 10 & 1,000,000 & 928,000 \\

\midrule
\multirow{2}{*}{Ant-v5} 
 & 13 & 912,000 & 904,000 \\
 & 16 & 936,000 & 984,000 \\

\midrule
\multirow{1}{*}{AdroitHandDoor-v1} 
 & 1 & 1,248,000 & 1,732,000 \\

\midrule
\multirow{1}{*}{AdroitHandRelocate-v1} 
 & 36 & 584,000 & 860,000 \\

\bottomrule
\end{tabular}
}
\end{table*}

\begin{table*}[t]
\centering
\caption{Evaluation performance of agents ($\pm$ 1 standard deviation) at final timestep on the 18 confounded environments. Bold Numbers indicate best-performing methods.}
\label{tab:state_removal_results_final_eval}
\adjustbox{max width=\textwidth}{
\begin{tabular}{l|c|c|c|c|c|c|c}
\toprule
Environment & Full State SAC & State Dim Removed & Baseline & CQL & Trex & Recurrent SAC & Causal PBRS (ours) \\
\midrule

\multirow{8}{*}{Hopper-v5} & \multirow{8}{*}{3500} 
 & 1 & 2005 $\pm$ 1253 & 1592 $\pm$ 934 & 1233 $\pm$ 718 & 490 $\pm$ 108 & \textbf{2199 $\pm$ 1022} \\
 &  & 2 & 1450 $\pm$ 1123 & 1018 $\pm$ 365 & 1344 $\pm$ 628 & 2407 $\pm$ 1325 & \textbf{2729 $\pm$ 745} \\
 &  & 3 & 3281 $\pm$ 122 & 1321 $\pm$ 297 & 2671 $\pm$ 1067 & 1379 $\pm$ 1196 & \textbf{3582 $\pm$ 37} \\
 &  & 5 & \textbf{3298 $\pm$ 183} & 1035 $\pm$ 19 & 2605 $\pm$ 1062 & 2711 $\pm$ 743 & 3055 $\pm$ 735 \\
 &  & 7 & \textbf{2885 $\pm$ 691} & 1239 $\pm$ 300 & 2081 $\pm$ 818 & 2448 $\pm$ 1616 & 2793 $\pm$ 638 \\
 &  & 5,6 & 645 $\pm$ 156 & 1060 $\pm$ 481 & 737 $\pm$ 297 & \textbf{1645 $\pm$ 1200} & 903 $\pm$ 84 \\
 &  & 1,2,3,4 & 265 $\pm$ 36 & \textbf{528 $\pm$ 224} & 303 $\pm$ 71 & 353 $\pm$ 63 & 301 $\pm$ 20 \\
 &  & 7,8,9,10 & 367 $\pm$ 11 & 409 $\pm$ 36 & 367 $\pm$ 62  & \textbf{3050 $\pm$ 375} & 1228 $\pm$ 1154 \\

\midrule
\multirow{3}{*}{Halfcheetah-v5} & \multirow{3}{*}{12400} 
 & 1 & 2013 $\pm$ 32 & 2039 $\pm$ 31 & 2057 $\pm$ 25 & \textbf{7378 $\pm$ 1303} & 2051 $\pm$ 22 \\
 &  & 3 & 8931 $\pm$ 1489 & 9748 $\pm$ 177 & 9106 $\pm$ 870 & \textbf{9875 $\pm$ 509} & 8949 $\pm$ 663 \\
 &  & 8 & 895 $\pm$ 495 & 373 $\pm$ 454 & 612 $\pm$ 444 & \textbf{7939 $\pm$ 851} & 1103 $\pm$ 216 \\

\midrule
\multirow{3}{*}{Walker2D-v5} & \multirow{3}{*}{4050} 
 & 6 & \textbf{3893 $\pm$ 263} & 3132 $\pm$ 403 & 3603 $\pm$ 334 & 2181 $\pm$ 1644 & 3884 $\pm$ 551 \\
 &  & 8 & 3580 $\pm$ 272 & \textbf{3708 $\pm$ 308} & 2663 $\pm$ 663 & 942 $\pm$ 136 & 3632 $\pm$ 1781 \\
 &  & 10 & 3925 $\pm$ 137 & 3424 $\pm$ 391 & \textbf{4385 $\pm$ 427} & 1124 $\pm$ 829 & 3814 $\pm$ 611 \\

\midrule
\multirow{2}{*}{Ant-v5} & \multirow{2}{*}{4000} 
 & 13 & 3082 $\pm$ 758 & 2781 $\pm$ 877 & 2663 $\pm$ 663 & 957 $\pm$ 4 & \textbf{3093 $\pm$ 1122} \\
 &  & 16 & \textbf{3474 $\pm$ 936} & 3169 $\pm$ 487 & 3536 $\pm$ 1708 & 1221 $\pm$ 518 & 2818 $\pm$ 1217 \\

\midrule
\multirow{1}{*}{AdroitHandDoor-v1} & \multirow{1}{*}{NA} 
 & 1 & 71 $\pm$ 102 & 308 $\pm$ 485 & 105 $\pm$ 178 & -27 $\pm$ 1 & \textbf{1289 $\pm$ 1252} \\

\midrule
\multirow{1}{*}{AdroitHandRelocate-v1} & \multirow{1}{*}{NA} 
 & 36 & 17 $\pm$ 5 & 13 $\pm$ 2 & \textbf{27 $\pm$ 20} & 17 $\pm$ 4 & 18 $\pm$ 13 \\

\bottomrule
\end{tabular}
}
\end{table*}
\section{Full Experiment Commentary}
\label{sec:full_commentary}
\subsection{Hopper}

We removed the following dimensions:

\begin{enumerate}[label=\textbullet, leftmargin=20pt, topsep=0pt, parsep=0pt, itemsep=1pt]
    \item State Dim 1: The Hopper's torso angle. One of the termination conditions in Hopper is if the torso angle is bounded between $[-0.2, 0.2]$. Without this dimension, the agent is unable to know how to adjust its torso angle to maintain healthy body positions. The Causal PBRS can improve on the baseline by around $20\%$, whereas CQL PBRS / TREX PBRS / Recurrent SAC underperform by $15\% / 38\% / 73\%$. 
    \item State Dim 2: Angle of the thigh joint. Removing this dimension has a large impact on the Hopper's performance, reducing its baseline return by $50\%$ vs the Hopper's performance with full obs capacity using SAC. The Causal PBRS method can drive a large improvement of almost $100\%$ vs the baseline. The Recurrent SAC method performs second best.
    \item State Dim 3: Angle of the leg joint. The leg joint is comparable to the knee, so without it, the Hopper has difficulties learning how to use its lower joints to propel itself forward. Interestingly, even with this dimension removed, the Hopper is able to get high returns, though it takes almost 900,000 steps. The Causal PBRS method is able to recover full state observation SAC's performance, and does so in only around 500,000 steps. All other methods underperform the baseline.
    \item State Dim 5: The Hopper's forward velocity. The Hopper's reward function is heavily dependent on its forward velocity. Therefore, not observing the forward velocity might decrease the critic's ability to estimate the Q-value function. Surprisingly, without this state dimension, the baseline agent can reach a return comparable to a Hopper agent with full observation capabilities. We note that the Causal PBRS agent surpasses a score of 3,000 after only 400,000 time steps, whereas the baseline agent requires almost 900,000 steps. This highlights the Causal PBRS's ability to accelerate training. 
    \item State Dim 7: The angular acceleration of the Hopper's torso. The Causal PBRS's average return is slightly higher than the baseline agent; however, we note the Causal PBRS reaches its performance peak at around 300,000 steps, whereas the baseline agent reaches its peak at 900,000 steps.
    \item State Dim 1,2,3,4: All of the angular positions of the Hopper's joints and Torso. Without knowing the position of its joints, the Hopper agent is unable to learn a meaningful policy. The Causal PBRS method is similarly unable to learn a meaningful policy, suggesting a limit to the Causal PBRS's ability to improve performance in highly confounded environments. Interestingly, the CQL method can generate a slightly better policy, although we note the overall still low return. 
    \item State Dim 5,6: The forward and vertical velocity of the hopper. The Causal PBRS agent can learn a better policy than the baseline, achieving almost double the baseline's return. The CQL method can learn a slightly better policy. The recurrent SAC method is able to learn the best policy, which is likely a result of the lower dimensionality of the environment, which reduces the computational complexity required for backpropagation through time.
    \item State Dim 7,8,9,10: The angular acceleration of the hopper's torso and joints. Interestingly, the baseline Hopper's performance is comparable to its performance when removing dimensions 1,2,3,4, however, the Causal PBRS agent can learn a more meaningful policy (return between 1200-1400). The best performance is the recurrent SAC method, which is able to achieve a very high score, although we know that this is likely because of the reduced dimension (trained on 7 dim instead of full 11) and computational complexity.
\end{enumerate}

\subsection{HalfCheetah}

We removed the following dimensions:
\begin{enumerate}[label=\textbullet, leftmargin=20pt, topsep=0pt, parsep=0pt, itemsep=1pt]
    \item State Dim 1: Angle of the front tip. Without the angle of the front tip (used to know the orientation of the Cheetah), the Half Cheetah agent's return drastically decreases from 12,000 to around 2,000. The baseline, the Causal PBRS methods, and the CQL PBRS methods all perform comparably. The recurrent SAC method greatly outperforms all of the other methods. 
    \item State Dim 3: Angle of the back shin. The back shin in the middle joint on the HalfCheetah's back leg. Removing this dimension from the observation dimension decreases performance slightly ($10\%$). Similar to removing state dimension 1, the Causal PBRS method performs similarly to the baseline; however, the recurrent SAC method can learn the best policy.
    \item State Dim 8: Velocity of the x-coordinate of the front tip. Unlike the Hopper, removing the forward velocity greatly reduces the HalfCheetah's return. The Causal PBRS method achieves a higher return vs the baseline policy. The recurrent SAC method greatly outperforms all of the other methods.
\end{enumerate}

\subsection{Walker2D}

We removed the following dimensions:
\begin{enumerate}[label=\textbullet, leftmargin=20pt, topsep=0pt, parsep=0pt, itemsep=1pt]
    \item State Dim 6: Angle of the left leg joint. Removing the left leg joint has a limited effect on the Walker2d. Both the Causal PBRS and Baseline methods perform comparably, but the CQL method underperforms both by almost $20\%$. The TREX PBRS method is able to achieve the best performance, although this is expected, given the lower amount of confounding bias (given the baseline and full-state SAC performance are comparable). Therefore, the TREX rewards are less biased, and the PBRS function does not inject noise into the environment reward. Despite the lower levels of confounding bias, the recurrent SAC method is unable to learn a useful policy highlighting the method's inability to scale to higher dimensions and more complicated environments (such as where the episode can be terminated, unlike the half-cheetah environment).
    \item State Dim 8: Velocity of the x-coordinate of the torso. Removing the forward velocity dimension of the Walker2d slightly decreases performance. The Causal PBRS method achieves a $20\%$ improvement over the baseline method. The CQL method improves on the baseline by 3\%, whereas the other two methods under perform the baseline.
    \item State Dim 10: Angular velocity of the angle of the torso. Removing the left leg joint observation dimensions has a limited effect on the Walker-2d.  The Causal PBRS, Online Baseline, and CQL PBRS perform comparably. The TrexPBR method again achieves the highest level of performance, similar to when removing dimension 6, while the recurrent SAC method greatly under performs the baseline.
\end{enumerate}

\subsection{Ant}

We removed the following dimensions:
\begin{enumerate}[label=\textbullet, leftmargin=20pt, topsep=0pt, parsep=0pt, itemsep=1pt]
    \item State Dim 13: Velocity of the x-coordinate of the torso. Similar to other environments, the Causal PBRS method helps the Ant recover some of the performance loss from the loss of its forward velocity observation, unlike the other methods, which all under perform, in particular the recurrent SAC methodology.
    \item State Dim 16: x-coordinate angular velocity of the torso. The Causal PBRS underperforms the baseline; however, the baseline performs much better than the baseline when removing dimension 13, suggesting less confounding bias in this setting and hence a potential reason for the poor performance. The CQL PBRS method slightly outperforms the Causal PBRS method, but still under performs the baseline.

\end{enumerate}

\subsection{Adroit Door}

We removed the following dimensions:
\begin{enumerate}[label=\textbullet, leftmargin=20pt, topsep=0pt, parsep=0pt, itemsep=1pt]
    \item State Dim 1: Angular position of the horizontal arm joint. The baseline agent and Causal PBRS agent perform comparably. The Causal PBRS method outperforms the baseline by $17\%$, while the CQL / TREX PBRS methods lags behind. Additionally, the causal PBRS method is the only method that is able to avoid the catastrophic forgetting and is able to maintain a similar performance throughout all 3m steps, whereas the other methods are unable to do so. The current SAC method is unable to learn any meaningful policy.
\end{enumerate}

\subsection{Adroit Relocate}
We removed the following dimensions:
\begin{enumerate}[label=\textbullet, leftmargin=20pt, topsep=0pt, parsep=0pt, itemsep=1pt]
    \item State Dim 36: x positional difference from the ball to the target. While the agent is still able to infer this position through other observations (namely dimensions 30 and 33), the overall performance is still low. The Causal PBRS method outperforms all methods, although we note overall performance is still very low.

\end{enumerate}

\end{document}